\documentclass[aos]{imsart}

\RequirePackage{amsthm,amsmath,amsfonts,amssymb}
\RequirePackage[numbers,sort&compress]{natbib}
\RequirePackage[colorlinks,citecolor=blue,urlcolor=blue]{hyperref}
\RequirePackage{graphicx}

\usepackage{anyfontsize} 
\usepackage{newtxtext} 
\usepackage{mismath} 
\usepackage{enumitem} 
\usepackage{booktabs} 
\usepackage{cleveref} 

\newcommand{\Alpha}{A} 
\newcommand{\defeq}{\stackrel{\text{def}}{=}} 
\newcommand{\varhyphen}[1]{{\operatorname{\mathit{#1}}}} 
\setlist[description]{leftmargin=\parindent,labelindent=\parindent,rightmargin=\parindent} 

\startlocaldefs

\theoremstyle{plain}
\newtheorem{axiom}{Axiom}
\newtheorem{claim}[axiom]{Claim}

\theoremstyle{definition}

\endlocaldefs

\begin{document}

\begin{frontmatter}
\title{Common TF–IDF variants arise as key components in the test statistic of a penalized likelihood-ratio test for word burstiness}

\begin{aug}
\author[A]{\fnms{Zeyad}~\snm{Ahmed}\ead[label=e1]{zaawad@upei.ca}\orcid{0009-0003-0781-2968}},
\author[A]{\fnms{Paul}~\snm{Sheridan}\ead[label=e2]{paul.sheridan.stats@gmail.com}\orcid{0000-0002-5484-1951}},
\author[A]{\fnms{Michael}~\snm{McIsaac}\orcid{0000-0002-5273-5450}},
\and
\author[B,C]{\fnms{Aitazaz~A.}~\snm{Farooque}\orcid{0000-0002-5353-6752}}
\address[A]{School of Mathematical and Computational Sciences,
University of Prince Edward Island \printead[presep={,\ }]{e1,e2}}

\address[B]{Canadian Centre for Climate Change and Adaptation, 
University of Prince Edward Island}

\address[C]{Faculty of Sustainable Design Engineering, University of Prince Edward Island}

\end{aug}

\begin{abstract}
TF–IDF is a classical formula that is widely used for identifying important terms within documents. We show that TF–IDF-like scores arise naturally from the test statistic of a penalized likelihood-ratio test setup capturing word burstiness (also known as word over-dispersion). In our framework, the alternative hypothesis captures word burstiness by modeling a collection of documents according to a family of beta-binomial distributions with a gamma penalty term on the precision parameter. In contrast, the null hypothesis assumes that words are binomially distributed in collection documents, a modeling approach that fails to account for  word burstiness. We find that a term-weighting scheme given rise to by this test statistic performs comparably to TF–IDF on document classification tasks. This paper provides insights into TF–IDF from a statistical perspective and underscores the potential of hypothesis testing frameworks for advancing term-weighting scheme development.
\end{abstract}

\begin{keyword}
\kwd{beta-binomial model}
\kwd{foundations}
\kwd{hypothesis testing}
\kwd{information retrieval}
\kwd{likelihood-ratio test}
\kwd{over-dispersion}
\kwd{statistical language model}
\end{keyword}

\end{frontmatter}


\section{Introduction}
Term Frequency-Inverse Document Frequency (TF–IDF) is a classical term-weighting scheme that has become widespread in information retrieval and text analysis. It quantifies the extent to which a given term of interest stands out in any one given document out of many. Specifically, TF–IDF assigns each term in a document a weight equal to its raw frequency in that document multiplied by the negative logarithm of the proportion of documents in which the term occurs. The higher a term's TF–IDF score in a document, the better it may be considered to characterize the document's content.

While TF–IDF was initially conceived as a heuristic, considerable progress has been made over the years in establishing theoretical foundations for its use, though most justifications are oriented toward the information retrieval community~\citep[see][]{Joachims1997, Hiemstra2000, Amati2002, Aizawa2003, Elkan2005, Roelleke2013, Havrlant2017}. In recent work, however, Sheridan~et~al.~\cite{Sheridan2026} took first steps to explicitly motivate TF–IDF from a statistical perspective. But justifications aimed at the statistical community remain nascent. In this paper we build on this emerging line of research to fill that gap by showing how common TF–IDF variants arise as key components in the test statistic of a penalized likelihood-ratio (PLR) testing framework comparing two statistical models of language, one of which explicitly captures a fundamental property of natural language called word burstiness.

In the context of a collection of textual documents, word burstiness, also known as word over-dispersion, refers to the tendency of certain words to appear in concentrated repetitions within few documents rather than being independently scattered across all collection documents~\cite[see][p. 284]{Irvine2017}. For instance, in a collection of news articles, such terms as ``vote'' and ``candidate'' are liable to appear repeatedly within the subset of articles pertaining to elections. This phenomenon violates the assumptions of the multinomial statistical model of language that is traditionally used in bag-of-words representations of document collections.

Madsen~et~al.~\cite{Madsen2005} addressed this shortcoming by modeling word burstiness using a family of Dirichlet-multinomial distributions to represent a collection of documents. Elkan~\cite{Elkan2005} subsequently showed that TF–IDF-like components emerge from the Fisher kernel of this model. This marked a significant theoretical advance, establishing a formal connection between TF–IDF and a statistical language model sophisticated enough to account for the word burstiness phenomenon. In the process, Elkan elegantly resolved what we dub the \emph{Event Space Problem of Information Retrieval}, as articulated by Robertson~\cite{Robertson2004, Robertson2005}, which encompasses the difficulty of treating TF–IDF within a probabilistic framework due to a mismatch in the event spaces of its TF and IDF components. Namely, the TF component is defined over the sample space of a single document, while the IDF component is defined over the sample space of documents in the entire collection.

Sheridan~et~al.~\cite{Sheridan2026} showed how TF–IDF emerges from Fisher's exact test of statistical significance under restrictive assumptions that enforce a contrived form of word burstiness. They did not, however, model this phenomenon outright. Here, we pick up where Sheridan~et~al.~\cite{Sheridan2026} left off by modeling word burstiness within a hypothesis testing framework. In particular, we propose a PLR test statistic for quantifying word burstiness that compares a binomial null model with a penalized beta-binomial alternative. The binomial null model, by definition, fails to account for word burstiness. In contrast, the beta-binomial model, which is a special case of the Dirichlet-multinomial model, accounts for word burstiness by accommodating for variability in term occurrences across documents. The likelihood ratio is penalized through the incorporation of a gamma-distributed penalty term that breaks a ridge present in the unpenalized alternative model. Notably, in part by exploiting the very same resolution to the Event Space Problem as in Elkan~\cite{Elkan2005}, we show that two common TF–IDF variants, BTF–IDF and TF–ICF, emerge as key components of the resulting PLR test statistic.

We emphasize that our PLR construction does not constitute a test in the classical statistical sense, as we do not advance a procedure for computing corresponding $p$-values. Rather, our focus lies squarely on establishing a theoretical link between the resulting test statistic and TF–IDF-like term-weightings.

In a simulated example, we found PLR test statistic scores to be strongly correlated with the sum of TF–IDF weights across documents ($r=0.9269$). Terms with higher TF–IDF scores tended to have larger values of the test statistic. Buoyed by this finding, we propose a term-weighting scheme that is motivated by the PLR test statistic. In real data document classification tasks, we found that the proposed term-weighting scheme performed comparably to TF–IDF. On the \textit{20~Newsgroups} dataset~\cite{Lang1995}, the proposed term-weighting scheme achieved an overall accuracy of~0.80, versus~0.81 for TF–IDF. Likewise, on the \textit{R8} dataset~\cite{CardosoCachopo2007}, a subset of the \textit{Reuters-21578} text categorization collection~\cite{Lewis1987}, the proposed term-weighting scheme performed comparably to TF–IDF, with both achieving~0.94 overall accuracy. Although TF–IDF did noticeably outperform the proposed scheme on highly imbalanced minority classes.
 
The primary contribution of this paper is to extend emerging theoretical justifications for TF–IDF that are grounded in statistical hypothesis testing~\citep[see][]{Sheridan2024, Sheridan2026}. In particular, we leveraged the Event Space Problem solution of Elkan~\cite{Elkan2005} to demonstrate that two common TF–IDF variants emerge naturally as components of a PLR test statistic for word burstiness, providing a direct connection between TF–IDF-like term-weighting schemes and statistical hypothesis testing. This contribution is distinct from that of Elkan~\cite{Elkan2005}, who shows that the Fisher kernel associated with a Dirichlet-multinomial model of text yields TF–like and IDF-like components. In contrast, our framework yields TF–IDF variants as key components of a PLR test statistic. In addition, we demonstrate that the term-weighting scheme given rise to by this test statistic performs comparably to TF–IDF in text classification settings. It is our aspiration that this work will inspire future investigations into the theoretical foundations of term-weighting schemes within the hypothesis testing paradigm.

The remainder of the paper is organized as six sections. \Cref{sec:background} provides preliminary background information. In~\Cref{subsec:bow-model} we introduce the notation for the bag-of-words model we use throughout the paper to model collections of documents. \Cref{subsec:tfidf-family} discusses term-weighting schemes. Three members of the TF–IDF family are defined, namely, TF–IDF, BTF–IDF, and TF–ICF. \Cref{subsec:dm-model} introduces the Dirichlet-multinomial statistical model of language. Outlined is the Dirichlet-multinomial framework for modeling document collections and the associated term-generation process. \Cref{subsec:bb-model} introduces the beta-binomial statistical model of language as a special case of the Dirichlet-multinomial model. Furthermore, we demonstrate the effect of changing the precision parameter on the distribution of term counts within documents. In~\Cref{subsec:event-space} we discuss the Event Space Problem of Information Retrieval. \Cref{sec:related-work} reviews prior work on modeling word burstiness using the Dirichlet-multinomial model, and justifications of TF–IDF rooted in word burstiness-aware models. \Cref{subsec:modeling-burstiness} outlines the key developments in the Dirichlet-multinomial model of language. \Cref{subsec:justifications-tfidf} presents existing justifications for TF–IDF using models that capture the word burstiness phenomenon. \Cref{sec:gbb-model} introduces the penalized beta-binomial model and its approximation. \Cref{subsec:bb-approximation} derives an approximation to the beta-binomial model under certain regularity conditions. \Cref{subsec:gamma-penalty} introduces a gamma distributed penalty term on the precision parameter to regularize the objective. \Cref{subsec:gbb-model-approximation} introduces the penalized beta-binomial language model that will be used in the rest of the paper. \Cref{sec:gbb-mles} derives the penalized maximum likelihood estimates for the approximated penalized beta-binomial model we developed. An interpretation of the results is conducted. \Cref{sec:gbb-param-estimation} illustrates the ridge that forms from the beta-binomial model approximation and how introducing a penalty term on the precision parameter breaks that ridge. \Cref{sec:likelihood-ratio-test} introduces a PLR test statistic for quantifying word burstiness and shows that TF–IDF variants emerge from it. In \Cref{subsec:lrt-formulation}, we outline the likelihood-ratio test framework and define the null and alternative hypotheses. \Cref{subsec:evaluation-of-lrt} derives the PLR test statistic. In particular, we show that two common TF–IDF variants emerge as key terms in the PLR test statistic. \Cref{sec:numerical-examples} compares the test statistic with TF–IDF using synthetic and real data. \Cref{subsec:numerical-synthetic} compares the test statistic with the sum of TF–IDF scores and show that there is a strong positive correlation between the two. \Cref{subsec:numerical-real} and \Cref{subsec:numerical-real-ii} demonstrate the validity of the proposed model's assumptions, and use a term-weighting scheme motivated by the test statistic and show that it performs comparably to TF–IDF in document classification tasks on the \textit{20~Newsgroups} and \textit{R8} datasets, respectively. \Cref{sec:discussion} concludes the paper with a summary of our main findings and a discussion of some potential directions for future work.

\section{Background}
\label{sec:background}
This section establishes the notation used throughout the paper, defines TF–IDF and related term-weighting schemes, identifies a key complication in developing a probabilistic interpretation of TF–IDF, and presents the beta-binomial statistical language model for document collections.

\subsection{Bag-of-words modeling framework} 
\label{subsec:bow-model}
Table~\ref{tab:bag-of-words-model} summarizes the bag-of-words model notation we use for representing a collection of $d>0$ documents (denoted $d_1,...,d_d$), each composed from a vocabulary of $m>0$ distinct terms (denoted $t_1,...,t_m$). Documents are treated as multisets of their constituent terms, meaning that term order is disregarded. Each document $d_j$ ($1 \leq j \leq d$) is associated with a term frequency vector $\boldsymbol{n}_j = (n_{ij})_{i=1}^m$, where $n_{ij} \geq 0$ indicates the count of term $t_i$ in document~$d_j$. In the paper, we use $d_1,...,d_d$ as generic symbols denoting the documents themselves and~$\boldsymbol{n}_j$ for the term frequency vector representation of $d_j$.

\begin{table*}[!th]
\caption{Notation for the bag-of-words model representation of a collection of documents.}
\centering
\resizebox{\textwidth}{!}{
\begin{tabular}{p{100pt}p{100pt}p{260pt}}
Symbol\thinspace /\thinspace Expr. & Name (Abbr.) & Description \\
\hline
$\mathcal{C}$ & Collection & Generic symbol for a collection of $d>0$ documents composed of $m~>~0$ distinct terms. \\
$d>0$ & Collection size & Number of documents in the collection. \\
$m>0$ & Vocabulary size & Number of distinct terms out of which collection documents are composed. \\
$t_i$ ($1\leq i\leq m$) & $i$'th term & Generic symbol for the $i$'th term in the vocabulary. \\
$d_j$ ($1\leq j\leq d$) & $j$'th document & Generic symbol for the $j$'th document in the collection. \\
$b_{ij}$ & Binary term frequency (BTF) & Indicator function defined as $1$ if the $i$'th term occurs in the $j$'th document, and $0$ otherwise. \\
$n_{ij}$ & Term frequency (TF) & Number of occurrences of the $i$'th term in the $j$'th document. \\
$b_i=\sum_{j=1}^d b_{ij}$ & Document frequency & Number of documents containing the $i$'th term. \\
$b_i/d$ & Document proportion & Proportion of documents containing the $i$'th term. \\
$\log(d/b_i)$ & Inverse document frequency (IDF) & Natural logarithm of the inverse document proportion $d/b_i$. \\
$n_i = \sum_{j=1}^d n_{ij}$ & Collection frequency & Number of occurrences of the $i$'th term in the collection. \\
$n_i/n$ & Collection proportion & Proportion of occurrences of the $i$'th term in the collection. \\
$\log(n/n_i)$ & Inverse collection frequency (ICF) & Natural logarithm of the inverse collection proportion $n/n_i$. \\
$r_i$ & N/A & Shorthand for $n_i - b_i + 1$. \\
$n_j = \sum_{i=1}^m n_{ij}$ & Document length & Total number of terms in the $j$'th document. \\
$\boldsymbol{n}_j = (n_{ij})_{i=1}^m$ & N/A & Term frequency vector for the $j$'th document. \\
$n = \sum_{j=1}^d n_j$ & Total collection frequency & Total number of terms in the collection. \\
$\mathcal{C}_i$ & Target-complement document collection & Representation of a collection $\mathcal{C}$ of $d>0$ documents with the vocabulary collapsed into two terms: the target $t_i$ and its complement $t_{\neg i}$ \\
\hline
\end{tabular}
}
\label{tab:bag-of-words-model}
\end{table*}

\subsection{TF–IDF and related term-weighting schemes}
\label{subsec:tfidf-family}
A \emph{term-weighting scheme} is any function that assigns a nonnegative, real-valued weight to each term-document pair within a collection. The weight of a term in a document is meant to quantify the term's importance in characterizing the document's content relative to the collection as a whole. Term-weighting schemes have traditionally been a fundamental component of text analysis methods used for performing such tasks as text classification, document retrieval, and topic modeling.

TF–IDF is a long-established term-weighting scheme with a storied history~\citep[see][]{SparckJones2004}. Formulated by Salton and Yang~\cite{Salton1973}, TF–IDF is defined as the product of two factors
\begin{equation} \label{eq:tfidf}
\varhyphen{TF–IDF}(i,j) \defeq n_{ij} \log(d / b_i),
\end{equation}
where $b_i$ is the document frequency as defined in Table~\ref{tab:bag-of-words-model}. Note that $\log x$, here and elsewhere in paper, denotes the natural logarithm of a positive real number, $x$. The term frequency factor $n_{ij}$ quantifies the importance of the $i$'th term in the $j$'th document. The inverse document frequency term $\log(d / b_i)$ downweights the $i$'th term according to how frequently it appears across the documents in the collection. Terms with high TF–IDF scores appear frequently in a given document but infrequently across the collection, making such terms effective for distinguishing that document from others in the collection.

TF–IDF is famously the template for numerous variants~\citep[see][]{Alshehri2023, Tang2024}, two of which are particularly relevant to the present work. The first is a simple variation on TF–IDF known as \emph{binary term frequency–inverse document frequency} (BTF–IDF), and is defined as
\begin{equation} \label{eq:btfidf}
\varhyphen{BTF–IDF}(i,j) \defeq b_{ij} \log(d / b_i).
\end{equation}
The only change from TF–IDF is that the term frequency $n_{ij}$ is swapped out for the indicator variable $b_{ij}$, which, as we have seen in Table~\ref{tab:bag-of-words-model}, accounts for the presence or absence of the $i$'th term in the $j$'th document. BTF–IDF is found among the various term-weighting schemes evaluated by Salton and Buckley~\cite{Salton1988} in their highly influential study of document retrieval methods. The second is the \emph{term frequency–inverse collection frequency} (TF–ICF), introduced by Kwok~\cite{Kwok1990}, and defined as 
\begin{equation} \label{eq:tficf}
\varhyphen{TF–ICF}(i,j) \defeq n_{ij} \log(n / n_i).
\end{equation}
The ICF factor, also traceable to Kwok~\cite{Kwok1990}, is an IDF cousin that characterizes term rarity across the entire collection without taking into account how terms are distributed across individual documents.

In recent years, word embeddings have largely supplanted traditional term-weighting schemes in text analysis by providing dense, low-dimensional representations that capture semantic similarity and contextual usage~\citep[see][]{Islam2023}. Word embeddings, which are trained on large textual corpora, have become foundational in state-of-the-art neural network models for such standard tasks such as text classification, translation, and question answering. However, TF–IDF family term-weighting schemes remain far from obsolete. They continue to offer interpretable, efficient, and effective baselines. This is particularly the case in resource-constrained settings, classical information retrieval, and scenarios where transparency and simplicity are desired over model complexity~\citep[see][]{Graff2025}.

\subsection{Modeling word burstiness with the Dirichlet-multinomial statistical model of language}
\label{subsec:dm-model}
The Dirichlet-multinomial distribution is a family of discrete distributions that arise naturally in Bayesian modeling of overdispersed multinomial processes. In the context of language modeling, Madsen~et~al.~\cite{Madsen2005} proposed the Dirichlet-multinomial statistical model of language which captures the word burstiness phenomenon.

Consider a collection, $\mathcal{C}$, of $d$ documents
composed from a vocabulary of $m$ terms. Under the Dirichlet-multinomial model, each document $d_j$ ($1 \leq j \leq d$) is modeled according to a Dirichlet-multinomial distribution with $n_j$ trials, and vector of positive \emph{concentration} parameters $\boldsymbol{\alpha} = (\alpha_1, \ldots, \alpha_m$) that is held in common by all documents. In this model, the multinomial parameter vector of term proportions for each document $d_j$ is drawn as $\boldsymbol{\Theta}_j \sim \text{Dir}(\boldsymbol{\alpha})$, and realized as~$\boldsymbol{\theta}_j = (\theta_{1j}, \ldots, \theta_{mj})$. Given~$\boldsymbol{\theta}_j$, term frequencies are generated according to $\boldsymbol{N}_j \sim~\text{Mult}(n_j, \boldsymbol{\theta}_j)$ with $\boldsymbol{N}_j$ realized as $\boldsymbol{n}_j=(n_{1j}, \ldots, n_{mj})$. Estimation theory for this model was pioneered by Minka~\cite{Minka2000}. The resulting distribution over the term count vector~$\boldsymbol{N}_j$ is
\begin{eqnarray}
    \Pr(\boldsymbol{n}_j \mid \boldsymbol{\alpha}) 
    & = & \int_{\boldsymbol{\theta}_j} \text{Mult}(\boldsymbol{n}_j \mid \boldsymbol{\theta}_j) \, \text{Dir}(\boldsymbol{\theta}_j \mid \boldsymbol{\alpha}) \,\, d\boldsymbol{\theta}_j \\
    & = & \frac{n_j!}{\prod_{i = 1}^{m} n_{ij}!} \cdot \frac{\Gamma(\alpha_0)}{\Gamma(n_j + \alpha_0)} \cdot \prod_{i = 1}^{m} \frac{\Gamma(n_{ij} + \alpha_i)}{\Gamma(\alpha_i)},
    \label{eq:dcm}
\end{eqnarray}
where the sum $\alpha_0=\sum_{i=1}^{m}{\alpha_i}$ denotes the \emph{precision} parameter and $\Gamma(\cdot)$ the gamma function.
For a derivation and further discussion we refer the reader to Minka~\cite{Minka2000} and Madsen~et~al.~\cite{Madsen2005}. It is this model that is commonly employed in language modeling applications~\citep[see][]{Madsen2005, Elkan2006, Cummins2015, Cummins2017}.

The Dirichlet-multinomial model allows for variability in $\boldsymbol{\theta_j}$ via the Dirichlet prior. The precision parameter $\boldsymbol{\alpha}$ controls the shape and concentration of the prior distribution over term proportions. In particular, the precision parameter $\alpha_0$ governs how sharply the sampled multinomial parameters $\boldsymbol{\theta}_j$ are distributed. For a term $t_i$, the ratio~$\alpha_i / \alpha_0$ represents the expectation of the term's corresponding $\theta_{ij}$ value. When $\alpha_0$ is large, $\theta_{ij}$ becomes concentrated around this expected value, leading to less variability in $\theta_{ij}$ across documents. In the limit as $\alpha_0 \xrightarrow{}\infty$, the distribution converges to the classical multinomial model with variability in $\theta_{ij}$ eliminated across documents, ignoring word burstiness often present in typical textual documents. Conversely, when $\alpha_0$ is small, the sampled $\theta_{ij}$ values exhibit high variability, which captures word burstiness.

\subsection{The beta-binomial statistical model of language} 
\label{subsec:bb-model}
The beta-binomial distribution is a family of discrete probability distributions that arise naturally in Bayesian modeling of overdispersed binomial processes. It has a wide range of applications and has been marshaled in such fields as ecology~\citep[see][]{Harrison2015}, epidemiology~\citep[see][]{Bakbergenuly2017}, and actuarial science~\citep[see][]{Panjer1992} where success counts across multiple trials exhibit greater variability than afforded by a standard binomial model.

When modeling a collection of textual documents, the beta-binomial distribution serves as a natural choice for capturing the word burstiness phenomenon in the framework where each document is represented in terms of a target term $t_i$ and its complement $t_{\neg i}$. We call the set of such documents the \emph{target-complement collection}, denoted by $\mathcal{C}_i$.

In this modeling framework, we collapse the vocabulary to two terms by isolating a single \emph{target term}~$t_i$ and pooling all remaining terms into a single \emph{complement term}~$t_{\neg i}$. Under the beta-binomial model, each document $d_j$ ($1 \leq j \leq d$) is modeled according to a beta-binomial distribution with $n_j$ trials and \emph{concentration} parameters $\alpha_i > 0$ and $\alpha_{\neg i} > 0$, denoted in vector form as $\boldsymbol{\alpha} = (\alpha_i, \alpha_{\neg i})$, that are held in common by all documents. In this model, the binomial success probability $\theta_{ij}$ for the target term $t_i$ in document $d_j$ is drawn from a beta prior, $\Theta_{ij} \sim \text{Beta}(\boldsymbol{\alpha})$, where $\Theta_{ij}$ is a random variable with realization $\theta_{ij}$, $\Theta_{\neg ij}=1-\Theta_{ij}$ is the corresponding random variable for the complement term with realization $\theta_{\neg ij}$. We denote the random vector of term proportions as $\boldsymbol{\Theta}_j = (\Theta_{ij}, \Theta_{\neg ij})$ and its realization as $\boldsymbol{\theta}_j = (\theta_{ij}, \theta_{\neg ij})$. The beta-binomial distribution for any given document, $d_j$, is a compound distribution in which the binomial distribution success probability $\theta_{ij}$ is drawn from a beta distribution $\Theta_{ij} \sim \text{Beta}(\boldsymbol{\alpha})$. Given~$\theta_{ij}$, term frequencies are generated according to $N_{ij} \sim~\text{Bin}(n_j, \theta_{ij})$ where $\boldsymbol{n}_j=(n_{ij}, n_{\neg ij})$ is a realization of the term frequencies for document $d_j$ under the \emph{target-complement} representation. Since $n_{\neg {ij}} = n_j - n_{ij}$, the term frequency $n_{ij}$ alone needs to be generated by the model. The resulting probability mass function is given by
\begin{eqnarray}
    \Pr(n_{ij} \mid \boldsymbol{\alpha}) 
    &=& \int_{\theta_{ij}} \text{Bin}(n_{ij} \mid \theta_{ij})  \, \text{Beta}(\theta_{ij} \mid \boldsymbol{\alpha}) \,\, d\theta_{ij} \\
    &=& \binom{n_j}{n_{ij}} \cdot \frac{B(n_{ij} + \alpha_i, n_j - n_{ij} + \alpha_{\neg i})}{B(\alpha_i, \alpha_{\neg i})} \\
    &=& \frac{n_j!}{n_{ij}! \, n_{\neg ij}!} \cdot  \frac{\Gamma(\alpha_{0i})}{\Gamma(n_j + \alpha_{0i})} \cdot \frac{\Gamma(n_{ij} + \alpha_i)}{\Gamma(\alpha_i)} \cdot \frac{\Gamma(n_{\neg ij} + \alpha_{\neg i})}{\Gamma(\alpha_{\neg i})}, \label{eq:bcb-pmf}
\end{eqnarray}
where the sum $\alpha_{0i}=\alpha_i + \alpha_{\neg i}$ is the \emph{precision} parameter, $B(\cdot,\cdot)$ denotes the beta function, and $\Gamma(\cdot)$ the gamma function. Note that in standard formulations of the beta-binomial distribution, the concentration parameters are typically denoted as $\alpha$ and $\beta$. Here, however, we use the symbols $\alpha_i$ and $\alpha_{\neg i}$ to emphasize the connection with its Dirichlet-multinomial model generalization. Also note that the \emph{precision} parameter $\alpha_{0i}$ here depends on how we collapse the vocabulary according to the target term $t_i$, unlike the Dirichlet-multinomial model's precision parameter $\alpha_0$.

Similar to the Dirichlet-multiomial model, the parameters $\alpha_i$ and $\alpha_{\neg i}$ jointly govern the shape and concentration of the distribution. When $\alpha_{0i}$ is large,~$\theta_{ij}$ is concentrated around its expected value~$\alpha_i/\alpha_{0i}$, resulting in behavior similar to the binomial model. When~$\alpha_{0i}$ is small, there is more variability in $\theta_{ij}$ across documents, thus modeling burstiness in word usage. In the limiting case where the variability in~$\theta_{ij}$ is eliminated, the beta-binomial distribution converges to the binomial distribution.

Figure~\ref{fig:bcb-viz} demonstrates the behavior of the beta-binomial distribution for different combinations of~$\alpha_i$ and $\alpha_{\neg i}$. Each sample point represents a term count distribution for a document. For a fixed ratio $\alpha_i/\alpha_{0i}$, we observe that when~$\alpha_{0i}$ is small, the points exhibit high variability and are widely scattered. As~$\alpha_{0i}$ increases, the points cluster more tightly around the expected value, reflecting reduced variability.

\begin{figure}[!ht]
\includegraphics[width=\linewidth]{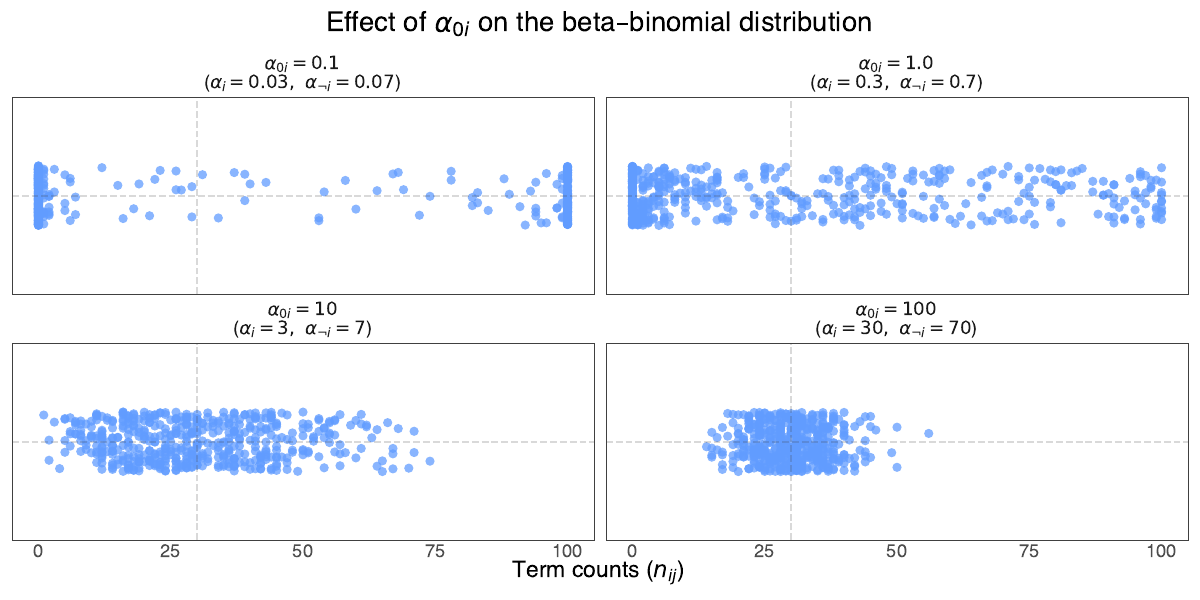}
\caption{Beta-binomial samples of target term $t_i$ counts in documents with fixed $\alpha_i/\alpha_{0i} = 0.3$ and increasing precision~$\alpha_{0i}~\in~\{0.1, 1.0, 10, 100\}$. Each point represents the count of $t_i$ in a single document of length $n_j=100$.}
\label{fig:bcb-viz}
\end{figure}

\subsection{The Event Space Problem of Information Retrieval}
\label{subsec:event-space}
Robertson~\cite{Robertson2004, Robertson2005} articulated a fundamental obstacle that hampers attempts to connect TF–IDF with probability theory, and by extension, with information theory and statistical language modeling. Robertson described this complication as ``an event space problem''~\cite[see][p. 505]{Robertson2004}. For ease of exposition, we refer to it as the Event Space Problem of Information Retrieval, or the Event Space Problem for short. In general, it concerns the difficulty of defining compatible event spaces for probability distributions over queries, documents, and relevance. In this work, we focus on a specific aspect of the problem: the challenge of integrating the term event space (i.e., the space of term occurrences within a document over which the TF component of TF–IDF is defined) with the document event space (i.e., the space of documents within a collection over which the IDF component of TF–IDF is defined) into a coherent probabilistic modeling framework. We will see that the estimation theory developed for the Dirichlet-multinomial model by Minka~\cite{Minka2000}, as applied in an underappreciated work by Elkan~\cite{Elkan2005}, offers an elegant resolution to this problem by naturally incorporating IDF into the structure of statistical language modeling. We adopt this same approach within a beta-binomial modeling framework.

\section{Related work}
\label{sec:related-work}
Previous work has exploited the Dirichlet-multinomial model for representing document collections on account of its ability to capture word burstiness. In parallel, other research has sought to justify the use of TF–IDF by establishing connections between it and statistical language models that explicitly account for the word burstiness phenomenon.

\subsection{Modeling word burstiness in text analysis}
\label{subsec:modeling-burstiness}
In an early application, Madsen~et~al.~\cite{Madsen2005} showed that the Dirichlet-multinomial model outperforms its multinomial model counterpart when it comes to classification accuracy on document classification tasks. Elkan~\cite{Elkan2006} then introduced an exponential-family approximation to the Dirichlet-multinomial (EDCM) that reformulates the model in a more analytically tractable form for analysis and training. The approximation makes maximum-likelihood training much faster than for the Dirichlet-multinomial. Elkan further applied EDCM in document clustering algorithms, achieving strong empirical performance. Cummins~\cite{Cummins2015} proposed the Smoothed P\'{o}lya Urn Document language model (SPUD) by smoothing a document specific Dirichlet-multinomial model with a background Dirichlet-mutinomial model estimated from the whole collection, and demonstrated improvements over standard multinomial language models in document retrieval tasks. Building on this, Cummins~\cite{Cummins2017} proposed a generalized multivariate P\'{o}lya process, encompassing models like the multinomial and the Dirichlet-multinomial as special cases via a replacement matrix. This replacement matrix defines a particular type of document language model depending on the diagonal entries of the matrix. In this framework, the Dirichlet-multinomial is equivalent to the replacement matrix being the identity matrix assuming all terms are equally bursty, whereas the multinomial model is equivalent to the zero matrix. Cummins mainly focused on the generalized matrix where the diagonal entries are either tuned or estimated based on the data using a generalized SPUD language model.

\subsection{Justifications for TF–IDF rooted in word burstiness}
\label{subsec:justifications-tfidf}
Using this modeling background, several papers connected TF–IDF with word burstiness aware models. Elkan~\cite{Elkan2005} derived the Fisher kernel of the Dirichlet-multinomial model and showed that it contains terms reminiscent of TF and IDF. Sunehag~\cite{Sunehag2007} modeled word burstiness using a two-stage presence/abundance model that is equivalent to a generalized P\'{o}lya urn model, which is closely related to the Dirichlet-multinomial and its EDCM approximation. Sunehag showed that TF–IDF follows naturally from a cross-entropy coding argument in this framework. Recently, Sheridan~et~al.~\cite{Sheridan2026} derived TF–IDF from Fisher's exact test under restrictive conditions that enforce an extreme form of word burstiness, leaving an opening for a less ad hoc, more statistics based justification under less restrictive assumptions.

\section{The penalized beta-binomial language model for word burstiness and its approximation}
\label{sec:gbb-model}
In this section we introduce an extension of the beta-binomial language model, which we dub the penalized beta-binomial language model, that incorporates a gamma distributed penalty term on the beta-binomial model precision parameter. We devise an approximation of this model that is tailored to the scenario in which the target term $t_i$ exhibits burstiness while the complement term $t_{\neg i}$ does not. We moreover derive penalized maximum likelihood estimates for the parameters of the approximate model and demonstrate that the inclusion of the gamma penalty term is critical for breaking a ridge in the nonpenalized likelihood function.

\subsection{An approximation to the beta-binomial language model}
\label{subsec:bb-approximation}
In this subsection, we secure an approximation to the beta-binomial language model that captures the behavior of bursty terms in document collections under specific assumptions.
\begin{claim}
\label{claim:cl1}
Under the beta-binomial language model, the probability of a document, $d_j$, within a target-complement collection, $\mathcal{C}_i$, is closely approximated by
\begin{equation} \label{eq:bcb-approx-1}
    \Pr(n_{ij} \mid \boldsymbol{\alpha}) \approx \frac{n_j!}{n_{\neg ij}!} \left(\frac{\alpha_i}{n_{ij}} \right)^{b_{ij}} \frac{\alpha_{\neg i}^{n_{\neg ij}}}{\alpha_{0i}^{n_j}}
\end{equation}
provided that the target term $t_i$ has a relatively small concentration parameter (i.e., $\alpha_i \ll 1$), and the complement term $t_{\neg i}$ has a relatively large concentration parameter (i.e., $\alpha_{\neg i} \gg 1$) and furthermore occurs in each document (i.e., $b_{\neg i} = d$).
\end{claim}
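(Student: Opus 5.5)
Starting from the exact beta-binomial pmf in \eqref{eq:bcb-pmf}, I approximate each gamma-function ratio separately under the stated asymptotic regime and then multiply the pieces. The factor $n_j!/(n_{ij}!\,n_{\neg ij}!)$ stays exact. The three ratios are
\[
\frac{\Gamma(\alpha_{0i})}{\Gamma(n_j+\alpha_{0i})}, \qquad \frac{\Gamma(n_{ij}+\alpha_i)}{\Gamma(\alpha_i)}, \qquad \frac{\Gamma(n_{\neg ij}+\alpha_{\neg i})}{\Gamma(\alpha_{\neg i})}.
\]
Each is a rising factorial: $\Gamma(x+k)/\Gamma(x)=\prod_{l=0}^{k-1}(x+l)$. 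So the whole argument reduces to elementary product approximations, and no Stirling-type asymptotics are needed.

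\textbf{Large parameters.} Because $\alpha_{\neg i}\gg 1$ and $\alpha_{0i}=\alpha_i+\alpha_{\neg i}\ge\alpha_{\neg i}$, both $\alpha_{\neg i}$ and $\alpha_{0i}$ dominate the shifts $l$ that appear. I approximate
\[
\prod_{l=0}^{n_{\neg ij}-1}(\alpha_{\neg i}+l)\approx\alpha_{\neg i}^{\,n_{\neg ij}}, \qquad \prod_{l=0}^{n_j-1}(\alpha_{0i}+l)\approx\alpha_{0i}^{\,n_j}.
\]
This yields the factor $\alpha_{\neg i}^{n_{\neg ij}}/\alpha_{0i}^{n_j}$ in \eqref{eq:bcb-approx-1}. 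The assumption $b_{\neg i}=d$ guarantees $n_{\neg ij}\ge 1$ in every document, so this factor is always a genuine nontrivial product and the regime is uniform across $\mathcal{C}_i$. I will state explicitly that "$\approx$" means the relative error is $O(n_j^2/\alpha_{\neg i})$.

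\textbf{Small parameter.} This is the step that produces the $b_{ij}$ structure, so I split on $b_{ij}$. If $n_{ij}=0$, the ratio is exactly $1=(\alpha_i/n_{ij})^{0}$ under the convention $0^0=1$. If $n_{ij}\ge1$, I write
\[
\frac{\Gamma(n_{ij}+\alpha_i)}{\Gamma(\alpha_i)}=\alpha_i\prod_{l=1}^{n_{ij}-1}(\alpha_i+l)\approx\alpha_i\,(n_{ij}-1)!,
\]
which uses $\alpha_i\ll1$ in every factor except the first. That first factor must be kept exactly, since it is what creates the $\alpha_i$ dependence. In both cases the ratio equals $\alpha_i^{b_{ij}}\,(n_{ij}-1)!^{\,b_{ij}}$.

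\textbf{Assembly.} Multiplying everything, $n_{ij}!$ cancels against $(n_{ij}-1)!$ and leaves $1/n_{ij}$ when $b_{ij}=1$; when $b_{ij}=0$ it leaves $1$ because $0!=1$. This gives exactly
\[
\frac{n_j!}{n_{\neg ij}!}\left(\frac{\alpha_i}{n_{ij}}\right)^{b_{ij}}\frac{\alpha_{\neg i}^{n_{\neg ij}}}{\alpha_{0i}^{n_j}}.
\]

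The main obstacle is not the algebra but making "closely approximated" precise. The approximation $\alpha_{\neg i}+l\approx\alpha_{\neg i}$ requires $n_{\neg ij}\ll\alpha_{\neg i}$, or at least that the errors in the numerator and denominator products nearly cancel. To handle this, I would take logarithms and bound $\sum_l\log(1+l/\alpha)$ by $n^2/(2\alpha)$. The numerator and denominator errors then largely cancel, since $n_{\neg ij}\approx n_j$ when $t_i$ is rare. The leftover discrepancy is controlled by $\alpha_i$ and $n_{ij}$, which I would note as the implicit regime of the claim.
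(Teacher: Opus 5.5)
Your proposal is correct and shares the paper's skeleton: approximate the three gamma quotients separately, multiply, and cancel $n_{ij}!$ against $(n_{ij}-1)!$. The difference lies in how each piece is justified.

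The paper writes $b_{ij}$ into an exponent, $e^{b_{ij}\log(\cdot)}$, and then relies on special-function asymptotics:
\begin{itemize}
\item the Laurent expansion $\Gamma(\alpha_i)=1/\alpha_i-\gamma+O(\alpha_i)$;
\item the Taylor bound $\Gamma(n_{ij}+\alpha_i)=\Gamma(n_{ij})+O(\alpha_i)$ from its appendix;
\item the ratio asymptotic $\Gamma(z+a)/\Gamma(z)\sim z^{a}$ for the complement and precision quotients.
\end{itemize}
It then drops all $O$-terms, implicitly holding $n_{\neg ij}$ and $n_j$ fixed as $\alpha_{\neg i},\alpha_{0i}\to\infty$.

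Your exact rising-factorial identities are more elementary and more informative, in three ways. First, they keep the factor $\alpha_i$ exact, with relative error about $\alpha_i\sum_{l=1}^{n_{ij}-1}1/l$. Second, they make the leading factor $\alpha_{\neg i}^{n_{\neg ij}}$ transparent; the paper's prose misstates it as $n_{\neg ij}^{\alpha_{\neg i}}$. Third, and most useful, they expose the regime the paper's bookkeeping hides.

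After the numerator/denominator cancellation you describe, the surviving error comes from the $n_{ij}$ extra denominator factors. So the exact product of the complement and precision quotients is about $(1+n_j/\alpha_{0i})^{-n_{ij}}$ times $\alpha_{\neg i}^{n_{\neg ij}}/\alpha_{0i}^{n_j}$. The claim therefore implicitly needs $n_{ij}\log(1+n_j/\alpha_{0i})\ll 1$. That condition is not automatic when $\alpha_{0i}$ is comparable to $n_j$, which is the very regime the gamma penalty later targets by setting $\mu$ to the mean document length.

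The paper's indicator-exponent form pays off only downstream, where it links document-level and collection-level quantities. A minor point: in your version the hypothesis $b_{\neg i}=d$ does no work, since $n_{\neg ij}=0$ simply gives the empty product $1=\alpha_{\neg i}^{0}$.
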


\begin{proof}[Derivation]
To begin, we rewrite the beta-binomial model probability mass function of Eq.~\eqref{eq:bcb-pmf} as
\begin{equation} \label{eq:bcb-exact-with-indicator}
    \Pr(n_{ij} \mid \boldsymbol{\alpha}) = \frac{n_j!}{e^{b_{ij} \log n_{ij}!} \cdot n_{\neg ij}!} \cdot \frac{\Gamma(\alpha_{0i})}{\Gamma(n_j + \alpha_{0i})} \cdot e^{b_{ij} \log \frac{\Gamma(n_{ij} + \alpha_i)}{\Gamma(\alpha_i)}} \cdot \frac{\Gamma(n_{\neg ij} + \alpha_{\neg i})}{\Gamma(\alpha_{\neg i})}
\end{equation}
where $b_{ij} \in \{0,1\}$ is an indicator accounting for the target term $t_i$ appearing in document $d_j$. The corresponding indicator $b_{\neg{ij}}$ of the complement term $t_{\neg i}$ does not appear in the equation since we assume $b_{\neg i} = d$ (i.e., $b_{\neg{ij}} = 1 \forall j$) which is a trivial condition since we do not consider cases where a document only consists of the target term $t_i$. The form of Eq.~\eqref{eq:bcb-exact-with-indicator} allows for the connection between the sample space of a single document and the sample space of documents in the collection when computing the log-likelihood as we will show later. This form was inspired by Elkan~\cite{Elkan2006} representing the Dirichlet-multinomial in its exponential family representation.

Next, we approximate each of the three gamma function quotients found to occur in Eq.~\eqref{eq:bcb-exact-with-indicator}. Consider first the target term quotient $\Gamma(n_{ij} + \alpha_i)/\Gamma(\alpha_i)$. Given that $0 < \alpha_i \ll 1$, we can use the approximation $\Gamma(\alpha_i) = \frac{1}{\alpha_i} - \gamma + \bigO(\alpha_i)$~\cite[\href{https://dlmf.nist.gov/5.7\#E4}{(5.7.4)}]{NIST2025}, where $\gamma$ denotes the Euler–Mascheroni constant. Likewise since $\alpha_i$ is small, the numerator $\Gamma(n_{ij} + \alpha_i)$ can be approximated by $\Gamma(n_{ij}) + \bigO(\alpha_i)$ as detailed in Appendix~\ref{appn:gamma-approx}. This becomes $(n_{ij}-1)! + \bigO(\alpha_i)$ since $n_{ij}$ is confined to take on positive integer values. The complement term quotient $\Gamma(n_{\neg ij} + \alpha_{\neg i})/\Gamma(\alpha_{\neg i})$ can be approximated as $n_{\neg ij}^{\alpha_{\neg i}} + \bigO(1/\alpha_{\neg i})$ since $\alpha_{\neg i}$ is large~\cite[\href{https://dlmf.nist.gov/5.11\#E12}{(5.11.12)}]{NIST2025}. By the same reasoning, the precision parameter quotient $\Gamma(\alpha_{0i})/\Gamma(n_j + \alpha_{0i})$ can be approximated by the reciprocal of $n_{j}^{\alpha_{0i}} + \bigO(1/\alpha_{0i})$.

Using these approximations, along with basic properties of exponents and logarithms, we isolate the main contributing terms in Eq.~\eqref{eq:bcb-exact-with-indicator} and simplify to obtain the exact relation
\begin{equation} \label{eq:bcb-special-case-exact}
    \Pr(n_{ij} \mid \boldsymbol{\alpha}) = \frac{n_j!}{n_{\neg ij}!} \left(\left(\frac{1}{n_{ij}} + \bigO(\alpha_i)\right) \frac{\alpha_i}{\left(1-\gamma \alpha_i + \bigO(\alpha_i^2)\right)} \right)^{b_{ij}} \frac{\alpha_{\neg i}^{n_{\neg ij}}\left(1 + \bigO(\alpha_{\neg i}^{-1})\right)}{\alpha_{0i}^{n_j}\left(1+\bigO(\alpha_{0i}^{-1})\right)}.
\end{equation}
Here, the exponential terms are simplified using the identity $e^{\log x} = x$, where $x$ is a positive real number.

As a final step, we simplify Eq.~\eqref{eq:bcb-special-case-exact} by dropping any big~$\bigO$ terms and recognizing that $1-\gamma \alpha_i = 1 - \bigO(\alpha_i) \approx 1$ since $\alpha_i \ll 1$, yielding the desired approximation of Eq.~\eqref{eq:bcb-approx-1}. This completes the derivation.
\end{proof}

\subsection{A gamma distributed penalty term for the beta-binomial model precision parameter}
\label{subsec:gamma-penalty}
We will see shortly that leaving the beta-binomial language model precision parameter $\alpha_{0i}$ unconstrained makes it difficult to maximize the objective. To address this problem, we introduce a gamma distributed penalty term for modeling $\alpha_{0i}$.

Take $\Alpha_{0i}$ to be a random variable whose realizations correspond to instantiations of~$\alpha_{0i}$. We assume $\Alpha_{0i} \sim \mathcal{G}(\mu,\sigma^2)$ follows a gamma distribution, parameterized by its mean $\E[\Alpha_{0i}] = \mu$ and variance $\V[\Alpha_{0i}] = \sigma^2$, with probability density function
\begin{equation} \label{eq:gamma-penalty}
    f(\alpha_{0i}; \mu, \sigma^2) = 
    \frac{\left( \frac{\mu}{\sigma^2} \right)^{\mu^2 / \sigma^2}}{ \Gamma\left( \frac{\mu^2}{\sigma^2} \right) }
    \alpha_{0i}^{\frac{\mu^2}{\sigma^2} - 1}
    e^{-\frac{\mu}{\sigma^2} \alpha_{0i}}.
\end{equation}
The gamma distribution constitutes a natural choice for modeling such scale-related quantities as precision. Its support on the positive real line aligns with the domain of $\alpha_{0i}$, and its two-parameter form allows for sufficient flexibility represent uncertainty spanning several orders of magnitude. In addition, the parameters $\mu$ and $\sigma^2$ are amenable to tuning. In subsequent numerical examples, we choose $\mu$ to be the mean document length so that the precision $\alpha_{0i}$ is centered around it. The choice of $\mu$ is motivated by empirical observations by Elkan~\cite{Elkan2005} which indicate that both the precision parameter and the average document length commonly take on values in the hundreds in the Dirichlet-multinomial framework.

\subsection{Penalized beta-binomial language model definition and approximation}
\label{subsec:gbb-model-approximation}
We define the penalized beta-binomial language model for a target-complement document collection $\mathcal{C}_i$, according to 
\begin{equation} \label{eq:gbb-pmf}
    L_{pen}(\mathcal{C}_i) = \prod_{j=1}^d \Pr(n_{ij} \mid \boldsymbol{\alpha}) \cdot f(\alpha_{0i}; \mu, \sigma^2),
\end{equation}
where $\Pr(n_{ij} \mid \boldsymbol{\alpha})$, as set out in Eq.~\eqref{eq:bcb-pmf}, denotes the beta-binomial probability of document~$d_j$. We use the notation $L_{pen}(\mathcal{C}_i)$ instead of $\Pr(\mathcal{C}_i)$ to emphasize that the inclusion of the penalty term means the resulting quantity is technically not a probability distribution. The density $f(\alpha_{0i}; \mu, \sigma^2)$ corresponds to the gamma penalty term of Eq.~\eqref{eq:gamma-penalty}. It is important to emphasize that the gamma penalty term is defined at the collection level, rather than at the level of individual documents. The formula 
\begin{equation} \label{eq:gbb-approx-pmf}
    L_{pen}(\mathcal{C}_i) \approx \prod_{j=1}^d \left(\frac{n_j!}{n_{\neg ij}!} \left(\frac{\alpha_i}{n_{ij}} \right)^{b_{ij}} \frac{\alpha_{\neg i}^{n_{\neg ij}}}{\alpha_{0i}^{n_j}}\right) \cdot f(\alpha_{0i}; \mu, \sigma^2)
\end{equation}
defines the penalized beta-binomial model approximation that will serve as our primary focus from this point forward. The expression inside the product is the approximate beta-binomial document-level probability mass function as specified in Eq.~\eqref{eq:bcb-approx-1}.

\subsection{Penalized maximum likelihood estimation} \label{sec:gbb-mles}
It is now time to derive the maximum likelihood estimates for the parameters of our approximation to the penalized beta-binomial language model. 

\begin{claim}
The maximum likelihood estimates for the parameters $\alpha_i$ and $\alpha_{\neg i}$ in the approximate penalized beta-binomial language model of a document collection $\mathcal{C}_i$ are given by
\begin{equation}
    \label{eq:bcb-penalized-mles}
    \hat{\alpha}_i = \hat{\alpha}_{0i}\frac{b_i}{b_i + n_{\neg i}} , \quad \text{and} \quad \hat{\alpha}_{\neg i} = \hat{\alpha}_{0i}\frac{n_{\neg i} }{b_i + n_{\neg i}},
\end{equation}
where $\hat{\alpha}_{0i} = \frac{\sigma^2}{\mu}\left(b_i - n_i - 1 + \mu^2 / \sigma^2 \right)$.
\label{claim:cl2}
\end{claim}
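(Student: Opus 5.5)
The plan is to maximize the logarithm of the approximate penalized likelihood in Eq.~\eqref{eq:gbb-approx-pmf} directly, after a reparametrization that separates the precision from the mean proportion. Substituting the gamma density of Eq.~\eqref{eq:gamma-penalty} and taking logarithms gives, up to additive terms not depending on $\boldsymbol{\alpha}$ (the factors $n_j!/n_{\neg ij}!$, the factors $n_{ij}^{-b_{ij}}$, and the normalizing constant of the gamma density),
\begin{equation*}
\ell(\alpha_i,\alpha_{\neg i}) = b_i \log \alpha_i + n_{\neg i}\log\alpha_{\neg i} - n\log\alpha_{0i} + \left(\frac{\mu^2}{\sigma^2}-1\right)\log\alpha_{0i} - \frac{\mu}{\sigma^2}\alpha_{0i}.
\end{equation*}
Here I use $\sum_j b_{ij}=b_i$, $\sum_j n_{\neg ij}=n_{\neg i}$ and $\sum_j n_j = n$. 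This summation step is exactly where the indicator form of Claim~\ref{claim:cl1} turns the document-level quantities into the collection-level document frequency $b_i$.

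Next I would change variables to $\alpha_{0i}=\alpha_i+\alpha_{\neg i}$ and $p=\alpha_i/\alpha_{0i}\in(0,1)$. This map is a bijection of $(0,\infty)^2$ onto $(0,\infty)\times(0,1)$, so maximizing in the new variables is equivalent. Writing $\alpha_i=p\,\alpha_{0i}$ and $\alpha_{\neg i}=(1-p)\alpha_{0i}$, and using $n-n_{\neg i}=n_i$, the objective separates as
\begin{equation*}
\ell = \bigl[b_i\log p + n_{\neg i}\log(1-p)\bigr] + \left(b_i - n_i - 1 + \frac{\mu^2}{\sigma^2}\right)\log\alpha_{0i} - \frac{\mu}{\sigma^2}\alpha_{0i}.
\end{equation*}

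The two pieces can now be maximized independently.
\begin{itemize}
\item The bracket is a binomial-type log-likelihood, strictly concave in $p$ whenever $b_i>0$ and $n_{\neg i}>0$. It is maximized at $\hat p=b_i/(b_i+n_{\neg i})$.
\item The $\alpha_{0i}$ part has the form $c\log\alpha_{0i}-\lambda\alpha_{0i}$ with $c=b_i-n_i-1+\mu^2/\sigma^2$ and $\lambda=\mu/\sigma^2$. It is strictly concave, and its unique stationary point is $\hat\alpha_{0i}=c/\lambda=\frac{\sigma^2}{\mu}\left(b_i-n_i-1+\mu^2/\sigma^2\right)$.
\end{itemize}
Mapping back via $\hat\alpha_i=\hat p\,\hat\alpha_{0i}$ and $\hat\alpha_{\neg i}=(1-\hat p)\hat\alpha_{0i}$ yields Eq.~\eqref{eq:bcb-penalized-mles}.

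The main obstacle is justifying that this stationary point is a genuine interior maximum. That requires $c>0$, i.e.\ $\mu^2/\sigma^2 > n_i-b_i+1 = r_i$.

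This is also where the role of the penalty becomes visible. Without it the coefficient of $\log\alpha_{0i}$ is $b_i-n_i\le 0$, since $n_i\ge b_i$, and there is no linear term. The unpenalized objective is therefore nonincreasing in $\alpha_{0i}$: it is flat along the ray of fixed $p$ when $n_i=b_i$, and degenerates toward the boundary otherwise. This is the ridge the paper refers to.

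I would state the condition $\mu^2/\sigma^2>r_i$ explicitly as the regime in which the claim holds. Under it, the objective tends to $-\infty$ as $\alpha_{0i}\to 0$ and as $\alpha_{0i}\to\infty$, so the stationary point is the global maximizer. I would also note that the regime is natural given the choice of $\mu$ in the hundreds and a moderate $\sigma^2$.
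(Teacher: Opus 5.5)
Your proof is correct, and it reaches the paper's estimates by a genuinely different route.

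\textbf{The paper's route.} The paper stays in the original coordinates. It differentiates the penalized log-likelihood in $\alpha_i$ and $\alpha_{\neg i}$. It then solves each stationarity equation for $\hat\alpha_i$ and $\hat\alpha_{\neg i}$ in terms of $\hat\alpha_{0i}$; both share the denominator $\mu\hat\alpha_{0i} + n\sigma^2 - \mu^2 + \sigma^2$. Finally it imposes $\hat\alpha_{0i} = \hat\alpha_i + \hat\alpha_{\neg i}$. This gives a linear equation for $\hat\alpha_{0i}$ and turns the shared denominator into $\sigma^2(b_i + n_{\neg i})$. That locates a critical point but says nothing about whether it is a maximum.

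\textbf{What your route adds.} Your mean--precision reparametrization $(p,\alpha_{0i})$ makes the objective additively separable, so each piece is maximized in closed form. Concavity plus the boundary behavior then show that the critical point is the unique global maximizer exactly when $c>0$, i.e.\ $\mu^2/\sigma^2 > r_i$ (with $b_i, n_{\neg i} > 0$). It also explains the penalty more sharply than the paper's remark that the unpenalized stationarity equations force $n = b_i + n_{\neg i}$. Without the penalty the coefficient of $\log\alpha_{0i}$ is $b_i - n_i$. So the approximate likelihood is flat along the ray $p=\hat p$ when $n_i = b_i$, and unbounded above as $\alpha_{0i}\to 0$ when $n_i > b_i$.

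\textbf{Two small points.} First, in your second bullet, strict concavity of $c\log\alpha_{0i} - (\mu/\sigma^2)\alpha_{0i}$ already requires $c>0$. For $c\le 0$ the function is strictly decreasing with no stationary point, so the caveat you give afterwards belongs in that bullet. Second, your condition $\mu^2/\sigma^2 > r_i$ is the correct one. The remark immediately after the paper's derivation writes it as $\mu^2/\sigma^2 > b_i - n_i - 1$, which is a sign slip. Section~\ref{subsec:evaluation-of-lrt} states it correctly as $\eta^2 > n_i - b_i + 1$.
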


\begin{proof}[Derivation]
From Eq.~\eqref{eq:bcb-approx-1} the approximate log-likelihood of document $d_j$ ($1 \leq j \leq d$) is
\begin{equation}
    \label{eq:bb-approx-ll}
    \ell(\boldsymbol{\alpha} \mid d_j) = \log n_j! - \log n_{\neg{ij}}! + b_{ij} \log \alpha_i - b_{ij} \log n_{ij} + n_{\neg{ij}} \log \alpha_{\neg i} - n_j \log \alpha_{0i}.
\end{equation}

The log-likelihood of $\boldsymbol{\alpha}$ for the collection $\mathcal{C}_i$, including the penalty term on the precision quantity $\Alpha_{0i}$, is
\begin{equation}
\label{eq:gbb-log-likelihood}
    \ell(\boldsymbol{\alpha} \mid \mathcal{C}_i) \defeq \ell(\mathcal{C}_i) = \sum_{j=1}^{d}\ell(\boldsymbol{\alpha} \mid d_j) + \log f(\alpha_{0i}).
\end{equation}
Taking partial derivatives of $\ell(\mathcal{C}_i)$ with respect to $\alpha_i$ and $\alpha_{\neg i}$ we get
\begin{equation}
    \frac{\partial l(\mathcal{C}_i)}{\partial \alpha_i} = \frac{b_i}{\alpha_i} - \frac{n}{\alpha_{0i}} + \frac{\mu^2 - \sigma^2}{\sigma^2 \alpha_{0i}} - \frac{\mu}{\sigma^2}, 
    \quad \text{and} \quad
    \frac{\partial l(\mathcal{C}_i)}{\partial \alpha_{\neg i}} = \frac{n_{\neg i}}{\alpha_{\neg i}} - \frac{n}{\alpha_{0i}} + \frac{\mu^2 - \sigma^2}{\sigma^2 \alpha_{0i}} - \frac{\mu}{\sigma^2}.
\end{equation}
Setting each expression to zero and solving for $\hat\alpha_i$ and $\hat\alpha_{\neg i}$, we obtain 
\begin{equation}
    \hat\alpha_i = \frac{b_i \sigma^2 \hat\alpha_{0i}}{\mu \hat\alpha_{0i} + n \sigma^2 - \mu^2 + \sigma^2}, 
    \quad \text{and} \quad
    \hat\alpha_{\neg i} = \frac{n_{\neg i} \sigma^2 \hat\alpha_{0i}}{\mu \hat\alpha_{0i} + n \sigma^2 - \mu^2 + \sigma^2}.
    \label{eq:mle-with-a0}
\end{equation}
We find $\hat\alpha_{0i}$ by taking the sum $\hat\alpha_i + \hat\alpha_{\neg i}$. Solving for $\hat\alpha_{0i}$ gives
\begin{equation}
    \label{eq:a0-mle-pen}
    \hat{\alpha}_{0i} = \frac{\sigma^2}{\mu}\left(b_i - n_i - 1 + \mu^2 / \sigma^2 \right).
\end{equation}
Substituting this value of $\hat{\alpha}_{0i}$ back into Eq.~\eqref{eq:mle-with-a0} yields 
\begin{equation}
    \hat{\alpha}_i = \frac{b_i \sigma^2 \left(b_i - n_i - 1 + \mu^2/\sigma^2 \right)}{\mu \left(b_i + n_{\neg i} \right)}, \quad \text{and} \quad
    \hat{\alpha}_{\neg i} = \frac{n_{\neg i} \sigma^2 \left(b_i - n_i - 1 + \mu^2/\sigma^2 \right)}{\mu \left(b_i + n_{\neg i} \right)},
\end{equation}
which leads to the expressions for $\hat{\alpha}_i$ and $\hat{\alpha}_{\neg i}$ in Eq.~\eqref{eq:bcb-penalized-mles} as was set out to show.
\end{proof}

Observe that the penalized maximum likelihood estimates from Eq.~\eqref{eq:bcb-penalized-mles} show $\hat\alpha_i$ to be roughly proportional to $b_i$, and $\hat\alpha_{\neg i}$ to $n_{\neg i}$. Given that the model assumes $\alpha_i \ll 1$, the fact that~$\hat{\alpha}_i$ is tied to the number of documents in which the term appears (i.e., $b_i$), rather that to its total frequency~$n_i$, suggests that document-level presence provides more information about $\alpha_i$ than does its aggregate term count at the collection level. By contrast, for the the complement term~$t_{\neg i}$, where the model assumes a corresponding $\alpha_{\neg i} \gg 1$, we find~$\hat\alpha_{\neg i}$ to be tied to the total number of its occurrences across the collection, $n_{\neg i}$, suggesting that $\alpha_{\neg i}$ is predominantly recoverable from its collection-level term count alone.

The penalty on $\Alpha_{0i}$ plays a critical role in yielding a unique penalized maximizer. In its absence, the penalized maximum likelihood estimates of $\alpha_i$ and $\alpha_{\neg i}$ reduce to
\begin{equation}
    \hat\alpha_i = \hat\alpha_{0i} \frac{b_i}{n}
    \quad \text{and} \quad
    \hat\alpha_{\neg i} = \hat\alpha_{0i} \frac{n_{\neg i}}{n},
\end{equation}
respectively. Since by definition $\alpha_{0i} = \alpha_i + \alpha_{\neg i}$, adding the above estimates for $\hat\alpha_i$ and $\hat\alpha_{\neg i}$ yields
\begin{equation}
    \hat\alpha_{0i} = \hat\alpha_{0i} \frac{b_i}{n} + \hat\alpha_{0i} \frac{n_{\neg i}}{n},
\end{equation}
which leaves $\hat\alpha_{0i}$ as a free parameter. In this case, any $\hat\alpha_{0i}$ would satisfy the equation. Consequently, the unpenalized approximate model of Eq.~\eqref{eq:bcb-approx-1} is unidentifiable with respect to the precision parameter $\alpha_{0i}$. This further implies the pathological condition $n = b_i + n_{\neg i}$, which holds if, and only, if the target term occurs exactly once in every document in which it occurs at all (i.e., $b_i = n_i$). 

We note that $\hat{\alpha}_{0i}$ in Eq.~\eqref{eq:a0-mle-pen} must be positive to yield valid parameter estimates. This requires $ \mu^2/\sigma^2 > b_i - n_i - 1$. In typical collections, this condition is generally satisfied when $\mu$ and $\sigma^2$ are chosen as discussed in \Cref{subsec:evaluation-of-lrt} and \Cref{sec:numerical-examples}. We also show in \Cref{sec:numerical-examples} that the proportion of terms that violate the penalized beta-binomial model assumptions is very small and we provide practical strategies to handle such terms.

\begin{figure}[!ht]
\centering
\includegraphics[width=\linewidth]{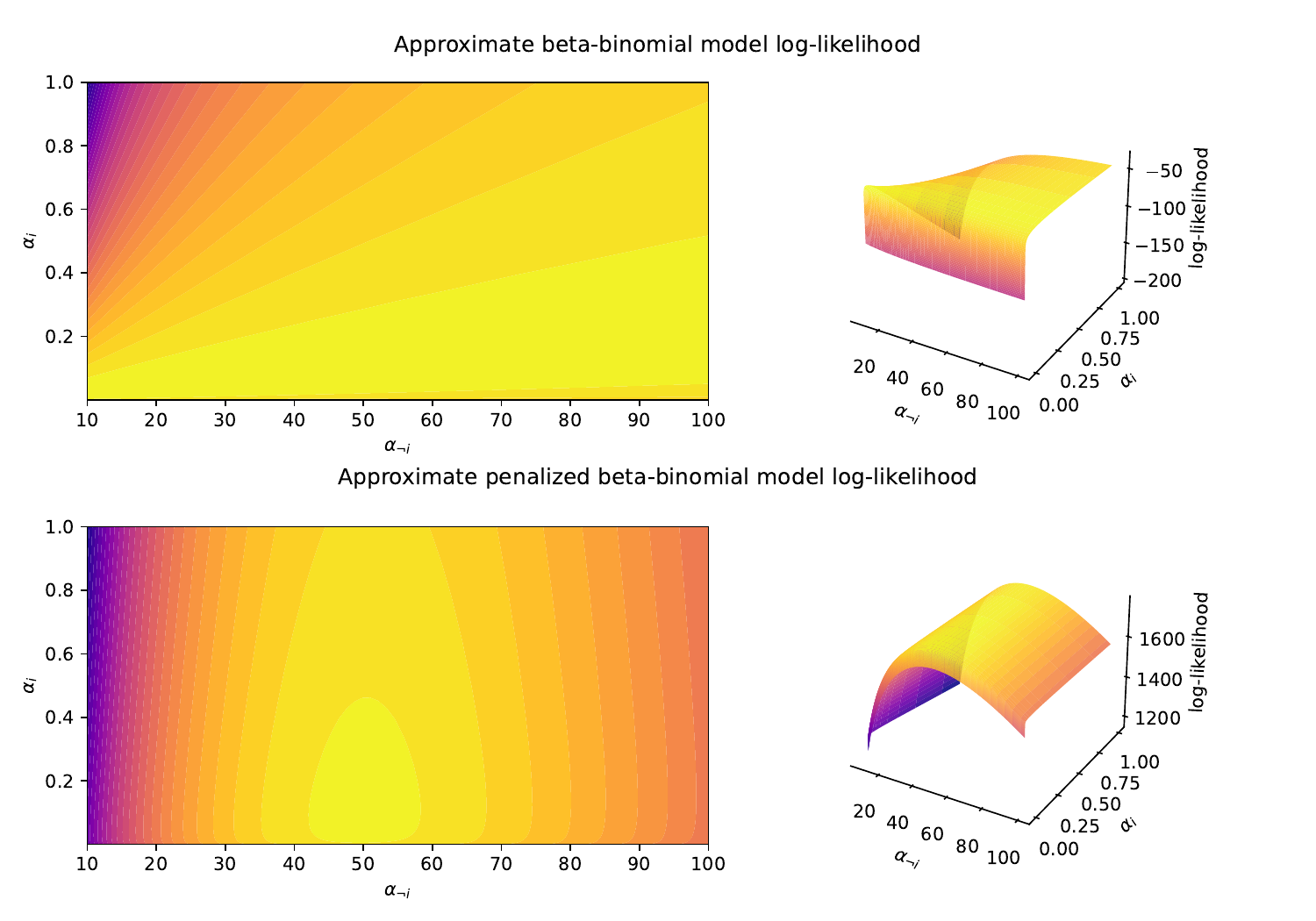}
\caption{Comparison of the approximate beta-binomial and penalized beta-binomial language models on a simulated document collection of size $d = 40$, with all documents having equal length $n_j = 50$ $(\forall j)$, and true parameter values $\alpha_i = 0.10$ and $\alpha_{\neg i} = 49.90$. The log-likelihood contour (upper left) and surface (upper right) plots for the beta-binomial approximation show a pronounced ridge structure. In contrast, the corresponding plots for the penalized beta-binomial approximation (bottom left and bottom right) show that the gamma penalty term breaks the ridge by regularizing the objective function. Gamma parameter values of $\mu=50$ and $\sigma^2=4$ were used to generate the plots. The choice $\sigma^2 = 4$ was made to make the breaking of the ridge visually apparent. However, alternative values of $\sigma^2$ produce the same qualitative effect, differing only in the steepness of the surface gradient.}
\label{fig:contour-v-surface}
\end{figure}

\subsection{Illustration of the effect of the gamma penalty term on parameter estimation} \label{sec:gbb-param-estimation}
To illustrate the model issue, we simulate a collection of $d=40$ documents, each containing $n_j=50$ words drawn from the beta-binomial distribution with $\alpha_i = 0.10$ and $\alpha_{\neg i}=49.90$. We then generate log-likelihood surface and contour plots as functions of $\alpha_i$ and $\alpha_{\neg i}$, both with and without the gamma penalty term on the precision parameter $\alpha_{0i}$, and compare the resulting estimates $\hat\alpha_i$ and $\hat\alpha_{\neg i}$ to the true underlying values. In evaluating the log-likelihood functions we set $\mu= 50$ (i.e., the mean document length) and $\sigma^2=4$. It is worth noting that the parameter $\sigma^2$ controls the surface gradient steepness of the gamma penalty term, and the value used here is chosen solely for visualization purposes to illustrate the breaking of the ridge from the unconstrained likelihood.

Figure~\ref{fig:contour-v-surface} presents this comparison. The top row shows the beta-binomial model approximation log-likelihood surface. A clear ridge appears along lines of roughly constant $\alpha_{0i}$, underscoring how this model is unidentifiable with respect to $\alpha_{0i}$ and how it struggles to distinguish between different combinations of $\alpha_i$ and $\alpha_{\neg i}$ that roughly sum to the same value. Without additional information, the true parameters $\alpha_i$ and $\alpha_{\neg i}$ cannot be accurately recovered. The bottom row corresponds to our approximation to the penalized beta-binomial model. The resulting log-likelihood surface displays a well-defined maximum, breaking the ridge yielding a unique penalized maximizer. The gamma penalty term effectively encourages values of $\alpha_{0i}$ near the gamma distribution mean (set to the average document length), resolving the ambiguity in $\alpha_{0i}$. It can be seen that the resulting penalized maximum likelihood estimates $\hat\alpha_i \approx 0.10$ and $\hat\alpha_{\neg i}\approx 49.77$ closely correspond to their respective true values.

\section{A penalized likelihood-ratio test statistic for word burstiness}\label{sec:likelihood-ratio-test}
In this section, we introduce a PLR test statistic for quantifying word burstiness and show that two variants of the TF–IDF term-weighting scheme arise as key components of this construction.

\subsection{Penalized likelihood-ratio test formulation}
\label{subsec:lrt-formulation}
The goal is to quantify the degree to which a target term, $t_i$, exhibits burstiness (or equivalently is over-dispersed) across the documents of a collection, $\mathcal{C}_i$, relative to what would be expected by chance. To this end, we define the following null and alternative hypotheses: 
\begin{description}
    \item[$\mathcal{H}_0$ ($t_i$ is non-bursty):] The  target term $t_i$'s document-level occurrences follow the binomial language model with $\theta_i = \hat\theta_i = n_i/n$.
    \item[$\mathcal{H}_1$ ($t_i$ is bursty):] The target term $t_i$'s document-level occurrences follow the approximate penalized beta-binomial model with $\alpha_i = \hat\alpha_i$ and $\alpha_{\neg i} = \hat\alpha_{\neg i}$.
\end{description}
Here, the binomial model serves as the null model. It assumes a single fixed probability of term occurrence across all documents. This model is too simplistic to capture burstiness. In contrast, the penalized beta-binomial model explicitly allows the term probability to vary between documents, therefore capturing burstiness in term counts, by which we mean the term counts are over-dispersed relative to what would be expected under the binomial model.

The penalized beta-binomial model includes a gamma penalty term on the precision parameter $\alpha_{0i}$ to discourage deviations from the mean document length, which is achieved by setting the gamma mean $\mu$ to the mean document length, as discussed in ~\Cref{subsec:gamma-penalty}. Consequently, the resulting ratio forms a PLR test statistic rather than a classical likelihood-ratio test statistic between two ordinary likelihoods. The PLR test statistic is defined as
\begin{equation}
        \lambda_i \defeq -2 \left( \ell_0(\hat{\theta}) - \ell_1(\hat{\boldsymbol{\alpha}}) \right),
\end{equation}
where $\ell_0(\hat{\theta})$ is the binomial log-likelihood evaluated at $\hat{\theta}_i = n_i/n$, and $\ell_1(\hat{\alpha})$ is the penalized beta-binomial log-likelihood from Eq.~\eqref{eq:gbb-log-likelihood} evaluated at $\boldsymbol{\hat{\alpha}}$ from Eqs.~\eqref{eq:mle-with-a0}~and~\eqref{eq:a0-mle-pen}.

The construction we describe does not constitute a test in the classical statistical sense, as we do not provide a concrete inferential procedure for computing a $p$-value to assess the statistical significance of word burstiness. Instead, our focus is on using the PLR test statistic as a scoring function for quantifying word burstiness, rather than on carrying out the associated hypothesis test. A formal testing procedure could, in principle, be obtained via a parametric bootstrap under the binomial null model.

\subsection{Evaluation of the test statistic and its relationship to TF–IDF}
\label{subsec:evaluation-of-lrt}
In this subsection, we derive the PLR test statistic for word burstiness and show that it can be approximated as a sum whose leading terms are common TF–IDF variants.

\begin{claim}\label{cl:cl3}
    The $\varhyphen{TF–IDF}$ variant term-weighting schemes $\varhyphen{BTF–IDF}$ and $\varhyphen{TF–ICF}$ naturally emerge from a PLR test statistic for the target term $t_i$ comparing a binomial null model with the approximate penalized beta-binomial alternative model of Eq.~\eqref{eq:gbb-approx-pmf}. The resulting test statistic $\lambda_i$ can be approximated as
        \begin{equation} \label{eq:lambda-tfidf}
        \begin{split}
            \lambda_i &\approx 2 \sum_{j=1}^d \Bigg[\varhyphen{TF–ICF}(i,j) - \varhyphen{BTF–IDF(i,j)} + \log\left(n_{ij}!/{n_{ij}}^{b_{ij}}\right) \Bigg] \\
        &\quad
            + 2n\log\frac{n}{b_i+n_{\neg i}}
            + 2(n_i - b_i) \log \frac{b_i + n_{\neg i}}{d\sigma^2}
            - (2b_i + 1)\log\mu \\
        &\quad
            + (\eta^2 - 2 r_i + 1)\log(\eta^2 - r_i) + r_i - \left(\eta^2 - 3/2\right)\log\eta^2 - \log\sqrt{2\pi},
        \end{split}
    \end{equation}    
    so long as $\eta^2 - r_i$ is positive, where $\eta = \mu/\sigma$ denotes the inverse coefficient of variation of the gamma distributed precision quantity $\Alpha_{0i}$.
\end{claim}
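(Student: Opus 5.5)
The plan is to write out both log-likelihoods explicitly and plug in the fitted parameters. After that, terms are regrouped so that the document-level sums $\sum_j n_{ij}\log(n/n_i)$ and $\sum_j b_{ij}\log(d/b_i)$ appear explicitly. Any leftover factorial or gamma-function pieces are handled with Stirling's formula.

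\emph{Null model.} The binomial log-likelihood at $\hat\theta_i=n_i/n$ is
\begin{equation*}
\ell_0=\sum_{j=1}^d\Big[\log n_j!-\log n_{ij}!-\log n_{\neg ij}!+n_{ij}\log\tfrac{n_i}{n}+n_{\neg ij}\log\tfrac{n_{\neg i}}{n}\Big].
\end{equation*}
The identity $\sum_j n_{ij}\log(n_i/n)=-\sum_j\varhyphen{TF–ICF}(i,j)$ already produces the TF–ICF component.

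\emph{Alternative model.} I would use Eq.~\eqref{eq:bb-approx-ll} summed over $j$, plus $\log f(\hat\alpha_{0i})$ from Eq.~\eqref{eq:gamma-penalty}. In the difference $\ell_1-\ell_0$, the terms $\log n_j!$ and $\log n_{\neg ij}!$ cancel. The term $\log n_{ij}!$ survives, and combining it with $-b_{ij}\log n_{ij}$ gives $\log(n_{ij}!/n_{ij}^{b_{ij}})$.

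Next I substitute the estimates of Claim~\ref{claim:cl2}, writing $\hat\alpha_{0i}=\frac{\sigma^2}{\mu}(\eta^2-r_i)$ with $r_i=n_i-b_i+1$. The remaining contributions regroup as follows:
\begin{itemize}
\item $b_i\log\hat\alpha_i$ becomes $b_i\log\frac{b_i}{b_i+n_{\neg i}}+b_i\log\hat\alpha_{0i}$.
\item $n_{\neg i}\log\hat\alpha_{\neg i}$ is treated the same way.
\item $-n\log\hat\alpha_{0i}$ stays as it is.
\end{itemize}
The key manipulation is writing $b_i\log\frac{b_i}{b_i+n_{\neg i}}=b_i\log\frac{b_i}{d}+b_i\log\frac{d}{b_i+n_{\neg i}}$. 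The first piece equals $-\sum_j\varhyphen{BTF–IDF}(i,j)$, since $\sum_j b_{ij}=b_i$. This is the Elkan-style resolution of the event space problem: a collection-level count is redistributed over documents.

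Collecting all the $\log(b_i+n_{\neg i})$ pieces together with the $n_{\neg i}\log(n_{\neg i}/n)$ terms from both models should yield $n\log\frac{n}{b_i+n_{\neg i}}$ and $(n_i-b_i)\log\frac{b_i+n_{\neg i}}{d}$. The powers of $\hat\alpha_{0i}$ net out to an exponent of $(b_i+n_{\neg i}-n)=-(r_i-1)$, which combines with the gamma density's $(\eta^2-1)\log\hat\alpha_{0i}$. Expanding $\log\hat\alpha_{0i}=\log\sigma^2-\log\mu+\log(\eta^2-r_i)$ then produces the $\log\sigma^2$ and $\log\mu$ coefficients and the factor $(\eta^2-2r_i+1)\log(\eta^2-r_i)$ after doubling.

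\emph{Gamma penalty.} The rest of $\log f$ is $\eta^2\log(\mu/\sigma^2)-\log\Gamma(\eta^2)-\frac{\mu}{\sigma^2}\hat\alpha_{0i}$. The last term equals $-(\eta^2-r_i)$, which explains the $+r_i$ in the statement once the $\eta^2$ pieces cancel. I would expand $\log\Gamma(\eta^2)$ by Stirling, $\log\Gamma(x)\approx(x-\tfrac12)\log x-x+\log\sqrt{2\pi}$, since $\eta^2$ is large. This is the only genuine approximation beyond Claim~\ref{claim:cl1}, and it supplies the $-(\eta^2-3/2)\log\eta^2$ and $-\log\sqrt{2\pi}$ terms; the extra $\log\eta^2$ comes from rewriting $\log(\mu/\sigma^2)$ in terms of $\eta$ and $\mu$.

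\emph{Expected obstacle.} I expect the main difficulty to be bookkeeping rather than ideas: tracking $\log\mu$, $\log\sigma^2$ and $\log\eta^2$ consistently, given that $\mu/\sigma^2=\eta^2/\mu$, so as to land exactly on the stated coefficients $-(2b_i+1)\log\mu$ and $2(n_i-b_i)\log(1/\sigma^2)$. I would verify this by collecting the coefficients of each logarithm separately. The positivity condition $\eta^2-r_i>0$ is needed only so that $\hat\alpha_{0i}>0$ and the logarithms are defined.
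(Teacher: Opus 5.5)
Your route is the paper's: substitute $\hat\theta_i$ and the estimates of Claim~\ref{claim:cl2}, split off the TF--ICF sum from the null term and the BTF--IDF sum by inserting $\log(d/d)$, and apply Stirling to $\log\Gamma(\eta^2)$. The penalty bookkeeping, however, fails as you have set it up. You take $\ell_1$ to be the summed Eq.~\eqref{eq:bb-approx-ll} plus $\log f(\hat\alpha_{0i})$, with $\lambda_i=-2(\ell_0-\ell_1)$, so every penalty contribution is doubled. The net coefficient of $\log(\eta^2-r_i)$ is then $2[(1-r_i)+(\eta^2-1)]=2(\eta^2-r_i)$, not $\eta^2-2r_i+1$. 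Likewise you obtain $+2r_i$, $-(2b_i+2)\log\mu$, $-(2\eta^2-3)\log\eta^2$ and $-\log(2\pi)$ instead of the stated $+r_i$, $-(2b_i+1)\log\mu$, $-(\eta^2-3/2)\log\eta^2$ and $-\log\sqrt{2\pi}$. The stated constants arise only when the likelihood part is doubled and $\log f$ is added with weight one. That is exactly what the paper does: from its first substitution on it writes $\lambda_i=-2\sum_j[\cdots]+\log f(\hat\alpha_{0i};\mu,\sigma^2)$. So your assertions that ``doubling'' yields $(\eta^2-2r_i+1)\log(\eta^2-r_i)$ and that the penalty supplies $+r_i$ contradict your own definition. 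To land on the claim you must adopt the paper's convention explicitly, and note that it is not literally $-2(\ell_0-\ell_1)$ with $\ell_1$ from Eq.~\eqref{eq:gbb-log-likelihood}. Otherwise the coefficient-by-coefficient check you plan will return a different formula.

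A second, smaller omission: your regrouping does not by itself produce $(n_i-b_i)\log\frac{b_i+n_{\neg i}}{d\sigma^2}$ and $-b_i\log\mu$ in the likelihood part. After the $n_{\neg i}\log n_{\neg i}$ terms cancel, you are left with $-n_i\log n+b_i\log d+(n_i-b_i)\log\mu$, where the last piece comes from $(b_i-n_i)\log\hat\alpha_{0i}$. Turning this into $-(n_i-b_i)\log d-b_i\log\mu$ requires $\log n=\log\mu+\log d$, i.e.\ $\mu=n/d$. The paper imposes this explicitly by setting the gamma mean to the mean document length; without it, $\lambda_i$ carries an extra $2n_i\log(\mu d/n)$. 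With both points fixed, the rest of your computation goes through as in the paper: the exponent $b_i+n_{\neg i}-n=1-r_i$ on $\hat\alpha_{0i}$, the $n\log\frac{n}{b_i+n_{\neg i}}$ term, and the Stirling step.
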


\begin{proof}[Derivation]
    We start from the general PLR test statistic
    \begin{equation}
        \lambda_i = -2 \left( \ell_0(\hat{\theta}) - \ell_1(\hat{\boldsymbol{\alpha}}) \right).
    \end{equation}
    Evaluating the binomial log-likelihood $\ell_0(\hat{\theta})$ at $\hat{\theta}_i = n_i/n$ yields
    \begin{equation*}
        \sum_{j=1}^d \log \binom{n_j}{n_{ij}} + n_{ij} \log \hat{\theta}_i + n_{\neg ij} \log \hat{\theta}_{\neg i}.
    \end{equation*}
    Evaluating the penalized beta-binomial log-likelihood, $\ell_1(\hat{\alpha})$, from Eq.~\eqref{eq:gbb-log-likelihood} at $\boldsymbol{\hat{\alpha}}$ gives the results in Eqs.~\eqref{eq:mle-with-a0}~and~\eqref{eq:a0-mle-pen}. Substituting these expressions, the PLR test statistic becomes
    \begin{equation}
        \begin{split}
        \lambda_i &= -2 \sum_{j=1}^d \Bigg[\log n_j! - \log n_{ij}! - \log n_{\neg{ij}}! + n_{ij}\log \hat{\theta}_i + n_{\neg ij} \log \hat{\theta}_{\neg i} \\
        &\quad - \log n_j! + \log n_{\neg{ij}}! - b_{ij} \log \hat{\alpha}_i + b_{ij} \log n_{ij} - n_{\neg{ij}} \log \hat{\alpha}_{\neg i} + n_j \log \hat{\alpha}_{0i} \Bigg] \\
        &\quad + \log f(\hat{\alpha}_{0i}; \mu, \sigma^2).
        \end{split}
    \end{equation}
    Using $\hat{\theta}_i = n_i/n$, this simplifies to
    \begin{equation}
        \begin{split}
        \lambda_i &= -2 \sum_{j=1}^d \Bigg[ n_{ij}\log\frac{n_i}{n} + (n_j-n_{ij})\log\left(\frac{n-n_i}{n}\right)-\log n_{ij}! - b_{ij}\log{\hat{\alpha}_i} \\
        &\quad + b_{ij}\log{n_{ij}} - (n_j-n_{ij})\log \hat{\alpha}_{\neg i} + n_j \log \hat{\alpha}_{0i} \Bigg] + \log f(\hat{\alpha}_{0i}; \mu, \sigma^2).
        \end{split}
    \end{equation}
    Grouping terms and applying logarithmic identities yields
    \begin{align}
        \begin{split}
            \lambda_i &= -2 \sum_{j=1}^d \Bigg[ n_{ij} \left(\log\frac{n_i}{n} - \log\frac{n-n_i}{n} + \log \hat{\alpha}_{\neg i} \right) \\
            &\quad + n_j \left(\log\frac{n-n_i}{n} + \log \hat{\alpha}_{0i} - \log \hat{\alpha}_{\neg i} \right) \\
            &\quad + b_{ij} \left(\log n_{ij} - \log \hat{\alpha}_i\right) - \log n_{ij}! \Bigg] + \log f(\hat{\alpha}_{0i}; \mu, \sigma^2)
        \end{split} \\
        \begin{split}
            &= -2 \sum_{j=1}^d \Bigg[ n_{ij} \log\frac{n_i \hat{\alpha}_{\neg i}}{n-n_i} + n_j \log\frac{\hat{\alpha}_{0i}(n-n_i)}{n\alpha_{\neg i}} + b_{ij}\log\frac{n_{ij}}{\hat{\alpha}_i} - \log n_{ij}! \Bigg] \\
            &\quad + \log f(\hat{\alpha}_{0i}; \mu, \sigma^2).
        \end{split}
    \end{align}
    Writing $\hat{\alpha}_i$ and $\hat{\alpha}_{\neg{i}}$ in terms of $\hat{\alpha}_{0i}$ from Eq.~\eqref{eq:bcb-penalized-mles}, we obtain
    \begin{equation}
    \label{eq:lambda-exact}
        \begin{split}
        \lambda_i &= -2 \sum_{j=1}^d \Bigg[
            n_{ij} \log\frac{n_i \hat{\alpha}_{0i} }{b_i + n_{\neg i}}
            + n_j \log\frac{b_i+n_{\neg i}}{n} + b_{ij} \log\frac{n_{ij}(b_i+n_{\neg i})}{b_i \hat{\alpha}_{0i}} - \log n_{ij}! \Bigg] \\
        &\quad 
            + \log f(\hat{\alpha}_{0i}; \mu, \sigma^2)
        \end{split}
    \end{equation}
    Finally, substituting $\hat{\alpha}_{0i}$ from Eq.~\eqref{eq:a0-mle-pen} gives
    \begin{equation}
    \label{eq:pre-mu}
        \begin{split}
            \lambda_i &= -2 \sum_{j=1}^d \Bigg[ n_{ij} \log\frac{n_i \sigma^2(b_i-n_i-1+\mu^2/\sigma^2)}{\mu(b_i + n_{\neg i})} + n_j\log\frac{b_i+n_{\neg i}}{n} \\
        &\quad
            + b_{ij}\log\frac{\mu n_{ij}(b_i+n_{\neg i})}{b_i \sigma^2 (b_i-n_i-1+\mu^2/\sigma^2)} - \log n_{ij}! \Bigg] + \log f(\hat{\alpha}_{0i}; \mu, \sigma^2).
        \end{split}
    \end{equation}
    We isolate the $n_{ij}\log(\cdot)$ term
    \begin{eqnarray*}
         n_{ij} \log\frac{n_i \sigma^2(b_i-n_i-1+\mu^2/\sigma^2)}{\mu(b_i + n_{\neg i})} &=& n_{ij} \log \frac{n_i}{\mu} + n_{ij} \log \frac{b_i-n_i-1+\mu^2/\sigma^2}{b_i+n_{\neg i}} + n_{ij}\log\sigma^2.
    \end{eqnarray*}
    Using $\mu = n/d$, we can write
    \begin{equation*}
        n_{ij} \log \frac{n_i}{n} + n_{ij}\log{d} + n_{ij} \log \frac{b_i-n_i-1+\mu^2/\sigma^2}{b_i+n_{\neg i}} + n_{ij}\log\sigma^2 .
    \end{equation*}
    We realize that first term is $-\varhyphen{TF–ICF}(i,j)$ and combine the logarithmic terms. The resulting expression is
    \begin{equation*}
        -\varhyphen{TF–ICF}(i,j) + n_{ij} \log \frac{d\sigma^2(b_i-n_i-1+\mu^2/\sigma^2)}{b_i+n_{\neg i}}.
    \end{equation*}
    Substituting this back into the main summation gives
    \begin{equation}
        \begin{split}
            \lambda_i &= -2 \sum_{j=1}^d \Bigg[ -\varhyphen{TF–ICF}(i,j) + n_{ij} \log \frac{d\sigma^2(b_i-n_i-1+\mu^2/\sigma^2)}{b_i+n_{\neg i}} + n_j\log\frac{b_i+n_{\neg i}}{n} \\
        &\quad
            + b_{ij}\log\frac{\mu n_{ij}(b_i+n_{\neg i})}{b_i \sigma^2 (b_i-n_i-1+\mu^2/\sigma^2)} - \log n_{ij}! \Bigg] + \log f(\hat{\alpha}_{0i}; \mu, \sigma^2).
        \end{split}
    \end{equation}
    Now, we isolate the $b_{ij}\log(\cdot)$ term and use $\mu=n/d$ to obtain
    \begin{equation*}
        b_{ij}\log\frac{\mu n_{ij}(b_i+n_{\neg i})}{b_i \sigma^2 (b_i-n_i-1+\mu^2/\sigma^2)} = b_{ij}\log\frac{n_{ij}}{b_i} + b_{ij}\log\frac{n(b_i+n_{\neg i})}{d \sigma^2 (b_i-n_i-1+\mu^2/\sigma^2)}.
    \end{equation*}
    Adding $\log{(d/d)}$ and using logarithmic properties gives
    \begin{equation*}
         b_{ij}\log\frac{d}{b_i} + b_{ij}\log\frac{nn_{ij}(b_i+n_{\neg i})}{d^2 \sigma^2 (b_i-n_i-1+\mu^2/\sigma^2)}.
    \end{equation*}
    We realize that the first term is $\varhyphen{BTF–IDF}(i,j)$ from Eq.~\eqref{eq:btfidf}. We also introduce the inverse coefficient of variation $\eta = \mu / \sigma$ and use $r_i = n_i - b_i + 1$ as defined in Table~\ref{tab:bag-of-words-model}. The equation can be rewritten as
    \begin{equation*}
        \varhyphen{BTF–IDF}(i,j) + b_{ij}\log\frac{nn_{ij}(b+n_{\neg i})}{d^2\sigma^2(\eta^2-r_i)}.
    \end{equation*}
    Substituting this back into the main summation, we find
    \begin{equation}
        \begin{split}
            \lambda_i &= -2 \sum_{j=1}^d \Bigg[ -\varhyphen{TF–ICF}(i,j) + \varhyphen{BTF–IDF(i,j)} + n_{ij}\log\frac{d\sigma^2(b_i-n_i-1+\mu^2/\sigma^2)}{b_i+n_{\neg i}} \\
        &\quad
            + n_j\log\frac{b_i+n_{\neg i}}{n}
            + b_{ij}\log\frac{nn_{ij}(b_i+n_{\neg i})}{d^2 \sigma^2 (b_i-n_i-1 + \mu^2/\sigma^2)} - \log n_{ij}! \Bigg]  \\
        &\quad + \log f(\hat{\alpha}_{0i}; \mu, \sigma^2).
        \end{split}
    \end{equation}
    We compute the terms in the summation and we evaluate the logarithm of the gamma penalty term (see Appendix~\ref{appn:log-gamma}). We get
    \begin{equation}
        \begin{split}
            \lambda_i &\approx 2 \sum_{j=1}^d \Bigg[\varhyphen{TF–ICF}(i,j) - \varhyphen{BTF–IDF}(i,j) + \log\left(n_{ij}!/{n_{ij}}^{b_{ij}}\right) \Bigg] \\
        &\quad
            + 2n\log\frac{n}{b_i+n_{\neg i}}
            + 2n_i \log \frac{b_i + n_{\neg i}}{d\sigma^2(\eta^2-r_i)}
            - 2b_i\log\frac{n(b_i+n_{\neg i})}{d^2 \sigma^2 (\eta^2-r_i)} \\
        &\quad
            + (\eta^2 - 1)\log(\eta^2 - r_i) - \log\mu + r_i - \left(\eta^2 - 3/2\right)\log\eta^2 - \log\sqrt{2\pi}.
        \end{split}
    \end{equation}
    Collecting like terms and substituting $\mu=n/d$ as needed yields
        \begin{equation}
        \begin{split}
            \lambda_i &\approx 2 \sum_{j=1}^d \Bigg[\varhyphen{TF–ICF}(i,j) - \varhyphen{BTF–IDF(i,j)} + \log\left(n_{ij}!/{n_{ij}}^{b_{ij}}\right) \Bigg] \\
        &\quad
            + 2n\log\frac{n}{b_i+n_{\neg i}}
            + 2(n_i - b_i) \log \frac{b_i + n_{\neg i}}{d\sigma^2}
            - (2b_i + 1)\log\mu \\
        &\quad
            + (\eta^2 - 2 r_i + 1)\log(\eta^2 - r_i) + r_i - \left(\eta^2 - 3/2\right)\log\eta^2 - \log\sqrt{2\pi}.
        \end{split}
    \end{equation}
    This completes the derivation.
    \end{proof}

We conclude this section with a brief discussion on $\lambda_i$. The terms inside the sum connect local statistics (i.e., document-level) with global statistics (i.e., collection-level). TF–ICF assigns high scores for rare terms in the collection, while BTF–IDF assigns high scores for terms that appear in a small fraction of documents. The third term, given the term appears in the document, evaluates to $\log\left(\left(n_{ij}-1\right)!\right)$ which is a TF-like term that adds weight for terms that occur frequently. For small to moderate term frequencies, this term behaves similarly to~TF. In the \textit{Reuters-21578} dataset, for instance, roughly 98\% of nonzero term counts lie in the range $n_{ij} \leq 7$, where $\log\left(\left(n_{ij}-1\right)!\right)$ closely tracks TF. As $n_{ij}$ increases beyond this range, the term exhibits asymptotic growth on the approximate order of $n_{ij}\log n_{ij} - n_{ij}$.

A practical consideration arises with the term $\eta^2 - r_i$ as it must remain positive (i.e., $\eta^2 = \mu^2/\sigma^2 > n_i - b_i + 1$). If $\sigma > \mu$, then $\eta^2 - r_i$ becomes negative regardless of the values of $n_i$ and $b_i$, since $n_i \geq b_i$. To avoid such issues in practice, one can set $\sigma^2 = 1$, which guarantees positivity as long as $n_i < \mu^2$. In our framework, this condition is generally satisfied by tuning $\mu$ as the mean document length. It is therefore imperative that we be mindful of this condition when setting the value of $\sigma$ in subsequent experiments. Note that all results following Eq.~\eqref{eq:pre-mu} use $\mu = n/d$, leaving $\sigma^2$ as the sole tunable parameter. In the extreme cases where $n_i \geq \mu^2$ (i.e, the total count of a term in the collection exceeds the mean document length squared) one can take $\max(1, \eta^2-r_i)$ which evaluates the term $(\eta^2-2r_i+1)\log(\eta^2-r_i)$ to zero. Reparameterization is also a possible alternative to handle such extreme cases.

\section{Numerical examples}
\label{sec:numerical-examples}
In this section, we explore applying the derived PLR test statistic as a term-weighting scheme and compare it with TF–IDF using both synthetic and real-world data.

\subsection{A simulated example}
\label{subsec:numerical-synthetic}
Here we simulate a synthetic collection of documents where each term is represented as a beta-binomial model to investigate how~$\lambda_i$ relates to TF–IDF weights.

Using a vocabulary of~$m=250$ terms we draw~$\boldsymbol{\alpha}$ from a skewed gamma distribution concentrated near zero, with most values very small but a long tail allowing a few relatively larger values as typical of real documents~\citep[see][]{Elkan2006}. Moreover, we introduce some variability in document lengths~$n_j$ by drawing~$n_j$ from a binomial distribution centered around~$n_j=75$ rather than using a fixed value. Given this modeling setup, for each document we draw term proportions~$\boldsymbol{\theta_j}$ from $\text{Dir}(\boldsymbol{\alpha})$, then sample term counts~$\boldsymbol{n_j}$ from $\text{Mult}(n_j, \boldsymbol{\theta_j})$. This generation process leverages the Dirichlet-multinomial model to produce synthetic documents reminiscent of those making up real-world corpora, as discussed in~\Cref{subsec:dm-model}.

The resulting collection is stored as a $d \times m$ matrix of term frequencies, with each entry~$(i,j)$ equal to the count $n_{ij}$ of term $t_i$ in document $d_j$. To match the structure assumed by the beta-binomial generative model, we represent each term count in a masked form. Specifically, for term~$t_i$ in document~$d_j$, we record the pair~$(n_{ij}, n_{\neg{ij}})$. This form facilitates the computation of the PLR test statistic~$\lambda_i$. To compute~$\lambda_i$ from Eq.~\eqref{eq:lambda-tfidf} we set~$\mu = \bar{n}_j$, where~$\bar{n}_j = 74.59$ is the sample mean of document lengths as discussed in ~\Cref{subsec:gamma-penalty} and $\sigma^2=1$. In practice, both $\mu$ and $\sigma^2$ can be tuned. Finally, we plot~$\lambda_i$ against the total TF–IDF for each term~$t_i$.

\begin{figure}[!ht]
\includegraphics[width=\linewidth]{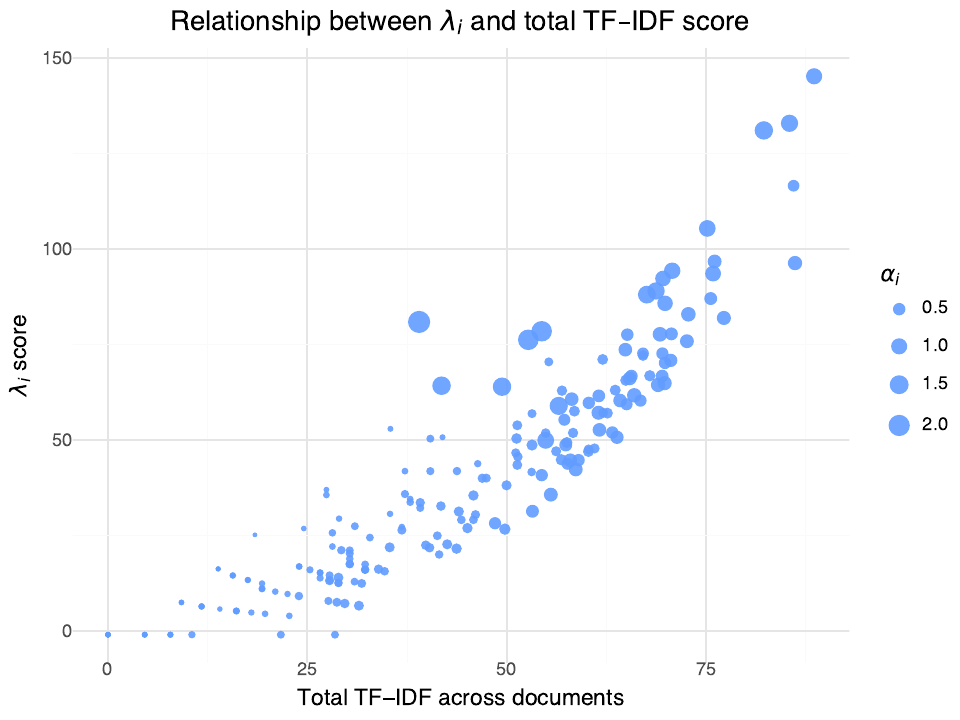}
\caption{Scatterplot of the relationship between the likelihood–ratio test score $\lambda_i$ and the total TF–IDF weight of term~$t_i$ across all collection documents. Point diameter is proportional to Dirichlet concentration parameter~$\alpha_i$ magnitude.}
\label{fig:lambda-vs-tfidf}
\end{figure}

Figure~\ref{fig:lambda-vs-tfidf} shows the relationship between the PLR test statistic $\lambda_i$ and the total TF–IDF of each term $t_i$ defined as $\sum_{j=1}^d{\varhyphen{TF–IDF}(i,j)}$. Each point corresponds to a term in the vocabulary, with point diameter proportional to the term's corresponding $\alpha_i$. The scatter plot shows a positive correlation where terms with higher TF–IDF weights also tend to have larger $\lambda_i$ values. In this simulation, the correlation coefficient is $r=0.9269$, rounded to four decimal places, which indicates that the formulation of $\lambda_i$ aligns closely with the empirical weighting captured by TF–IDF. It can be seen that some points in the plot appear as outliers. These arise because a few $\alpha_i$ values are relatively large, which violate the assumptions of the penalized beta-binomial model underlying the derivation of $\lambda_i$ (i.e., $\alpha_i \ll 1$). However, these outliers have a limited impact on the reliability of $\lambda_i$ as we show in the following real-data examples that the proportion of terms with a corresponding $\alpha_i$ violating this condition is in practice very small.

\subsection{The 20 Newsgroups dataset example}
\label{subsec:numerical-real}
In this subsection, we evaluate the term-weighting scheme motivated by our PLR test statistic on the \textit{20~Newsgroups} dataset, a widely used benchmark for text classification curated by Lang~\cite{Lang1995}. The collection contains approximately 20,000 Usenet posts distributed across 20 topics. We analyze the preprocessed version included in the Python \texttt{scikit-learn} library, consisting of 18,846 posts. Additional preprocessing was applied to convert all text to lowercase, remove punctuation, non-alphabetic characters, and stopwords. Additionally, multiple whitespace was collapsed into single spaces. After preprocessing, the dataset contained 88,701 unique terms. We split the documents into training (60\%) and testing (40\%) subsets for our experiments.

Before training the classifier, we empirically verify that the vocabulary broadly satisfies the theoretical conditions required for the beta-binomial approximation derived in Claim~\ref{claim:cl1}, namely that the fitted parameters satisfy $a_i \ll 1$ and $a_{\neg i} \gg 1$ for the vocabulary terms.

To do so, we fit a beta-binomial distribution independently for each term using the target-complement document representation of the dataset, where each document $d_j$ is represented as the ordered pair $(n_{ij},\, n_{\neg{ij}})$. Parameter estimation was carried out via maximum likelihood using the SciPy Python library function \texttt{scipy.optimize.minimize} with a uniform initialization of $(\alpha_i, \alpha_{\neg i}) = (1, 1)$. Stopwords we deliberately retained in order to contrast their behavior with other terms.

The fitted parameters exhibit a strong concentration in the theoretically predicted regime: 99.04\% of terms satisfy $\alpha_i < 0.1$, while 98.89\% satisfy $\alpha_{\neg i} > 10$. These findings largely validate the assumptions of Claim~\ref{claim:cl1}, namely that the target-term concentration parameter is sufficiently small, whereas the complement concentration parameter is sufficiently large.

\begin{figure}[!ht]
    \centering
    \includegraphics[width=\linewidth]{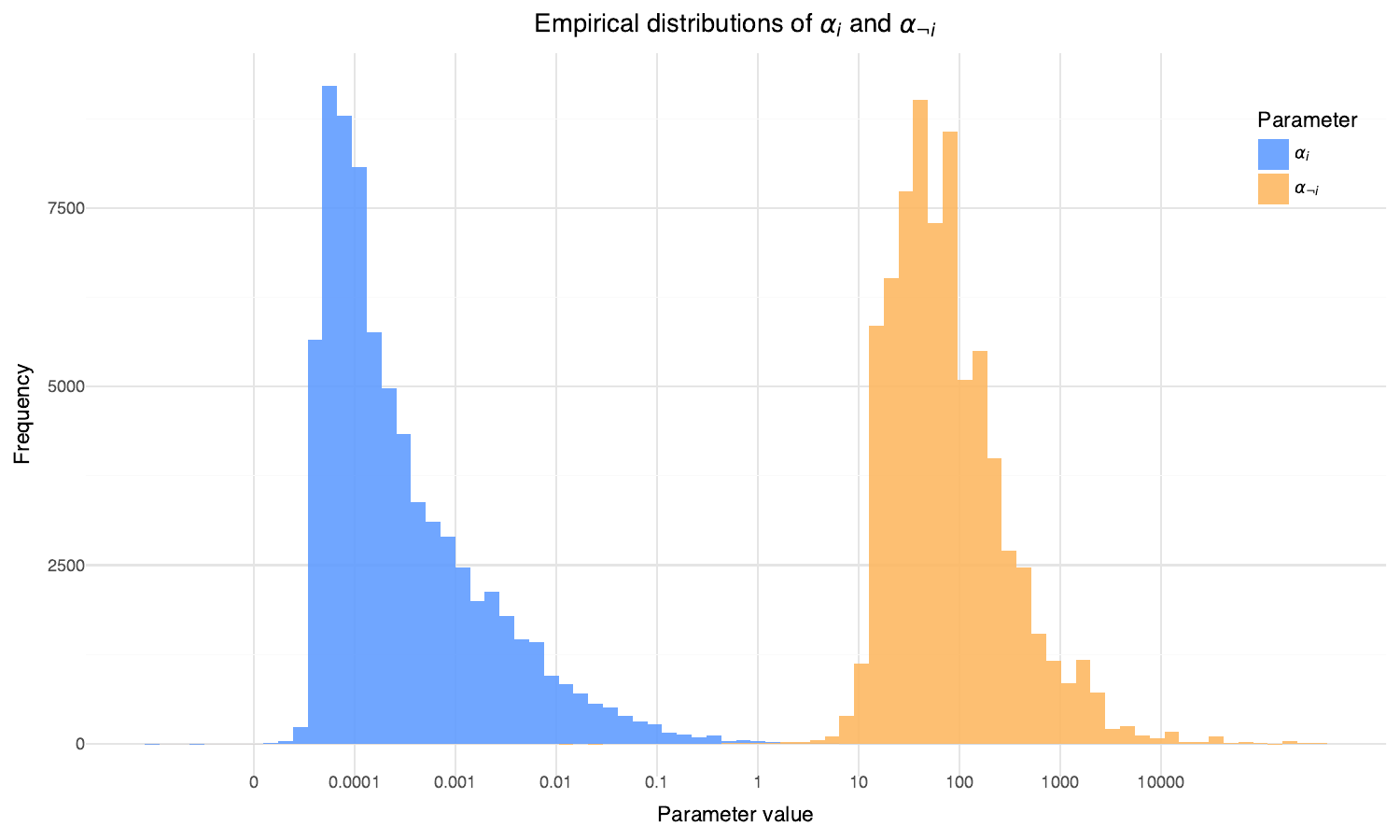}
    \caption{Empirical distributions of the fitted beta-binomial parameters $\alpha_i$ and $\alpha_{\neg i}$ for the \textit{20~Newsgroups} dataset vocabulary, shown on a natural logarithmic scale. The pronounced concentrations of mass at $\alpha_i \ll 1$ and $\alpha_{\neg i} \gg 1$ support the assumptions of Claim~\ref{claim:cl1}.}
    \label{fig:empirical-alphas-20ng}
\end{figure}

Figure~\ref{fig:empirical-alphas-20ng} shows the empirical distributions of the fitted parameters $a_i$ and $a_{\neg i}$ on a natural logarithmic scale. As illustrative examples, the bursty and semantically specific term ``ambulance'' was assigned $(\alpha_i, \alpha_{\neg i}) = (0.0021,\, 128.30)$, while ``baby'' yielded $(0.0041,\, 83.48)$. In contrast, high-frequency function words exhibit substantially larger target parameters; for instance, ``the'' was assigned $(5.22,\, 93.38)$ and ``for'' $(3.33,\, 306.70)$.

Motivated by the PLR test statistic approximation derived in Claim~\ref{cl:cl3}, we define a new term-weighting scheme, $\mathcal{S}(\lambda_{ij})$, where $\mathcal{S}(\cdot)$ denotes the sigmoid function, and 
\begin{equation}
\label{eq:lambda-ij}
    \begin{split}
        \lambda_{ij} &\defeq
            \varhyphen{TF–ICF}(i,j)
            - \varhyphen{BTF–IDF}(i,j)
            + \log \left(n_{ij}!/{n_{ij}}^{b_{ij}}\right)  \\
        &\quad
            + (n_{ij} - b_{ij}) \log\frac{b_i + n_{\neg i}}{d \sigma^2}
            + n_j \log \frac{n}{b_i + n_{\neg i}}    
            - \left(b_{ij} + \frac{1}{2d}\right) \log\mu \\
        &\quad
            + \frac{1}{2d}
            \left(
              (\eta^2 - 2r_i + 1) \log\left(\max(1 , \eta^2 - r_i)\right) 
              - (\eta^2 - 3/2) \log \eta^2
              + r_i
            \right).
    \end{split}
\end{equation}
In this formulation, Eq.~\eqref{eq:lambda-tfidf} can be rewritten as $\lambda_i \approx 2 \sum_{j=1}^d \lambda_{ij} - \log\sqrt{2\pi}$, so that~$\lambda_{ij}$ represents the contribution of term~$t_i$ in document~$d_j$, neglecting the constant contribution from the $-\log\sqrt{2\pi}$ term. To handle any troublesome terms satisfying $\eta^2-r_i \leq 0$, we implant the maximum of $\eta^2-r_i$ and $1$ inside the relevant logarithm. Such cases may alternatively be obviated by reparameterizing $\sigma$ to be sufficiently small so as to ensure that the $\eta^2 - r_i > 0$ condition is satisfied.

Since $\lambda_{ij}$ may take on negative values, we apply the sigmoid transformation $\mathcal{S}(\lambda_{ij}) = 1/(1+e^{-\lambda_{ij}})$ to map the weights to the open unit interval. In this experiment, the parameter~$\mu$ was tuned to the preprocessed corpus mean document length of the training set of 165.29, while~$\sigma^2$ was fixed at $1$. This choice can be used as a starting point in practice. The choice of $\sigma^2=1$ is to mitigate against troublesome words satisfying $\eta^2-r_i \leq 0$ as discussed in \Cref{sec:gbb-mles,subsec:evaluation-of-lrt}. In practical settings, both hyperparameters can be tuned to optimize performance. However, our goal is to demonstrate the performance of $\mathcal{S}(\lambda_{ij})$ when $\mu$ is set to the mean document length and to avoid values of $\sigma^2$ which result in $\eta^2-r_i \leq 0$. This setup is convenient insofar as it makes use of all of the terms occurring the in formula of Eq.~\eqref{eq:lambda-tfidf}.

We refer the interested reader to Fig.~\ref{fig:20ng-sensitivity-analysis} in ~\Cref{appn:sensitivity-analyses} which reports the overall accuracy of the classifier across selected values of $\mu$ and $\sigma^2$. This sensitivity analysis shows that the accuracy remains stable across non-extreme parameter values.

\begin{table}[!ht]
\centering
\caption{Multinomial Naive Bayes classification results on the \textit{20~Newsgroups} dataset using TF–IDF and $\mathcal{S}(\lambda_{ij})$ features.}
\begin{tabular}{lcccccc}
 & \multicolumn{3}{c}{\textbf{TF–IDF}} & \multicolumn{3}{c}{$\boldsymbol{\mathcal{S}(\lambda_{ij})}$} \\
\cmidrule(lr){2-4} \cmidrule(lr){5-7}
\textbf{Document Class} & Precision & Recall & F1 & Precision & Recall & F1 \\
\midrule
alt.atheism & 0.82 & 0.82 & 0.82 & 0.79 & 0.76 & 0.78 \\
comp.graphics & 0.61 & 0.76 & 0.68 & 0.74 & 0.69 & 0.71 \\
comp.os.ms-windows.misc & 0.86 & 0.10 & 0.17 & 0.84 & 0.45 & 0.58 \\
comp.sys.ibm.pc.hardware & 0.52 & 0.78 & 0.62 & 0.62 & 0.76 & 0.69 \\
comp.sys.mac.hardware & 0.73 & 0.85 & 0.78 & 0.86 & 0.75 & 0.80 \\
comp.windows.x & 0.80 & 0.75 & 0.77 & 0.70 & 0.85 & 0.76 \\
misc.forsale & 0.78 & 0.74 & 0.76 & 0.90 & 0.71 & 0.79 \\
rec.autos & 0.87 & 0.91 & 0.89 & 0.88 & 0.90 & 0.89 \\
rec.motorcycles & 0.91 & 0.96 & 0.94 & 0.96 & 0.93 & 0.95 \\
rec.sport.baseball & 0.94 & 0.94 & 0.94 & 0.96 & 0.89 & 0.93 \\
rec.sport.hockey & 0.96 & 0.97 & 0.96 & 0.90 & 0.98 & 0.94 \\
sci.crypt & 0.87 & 0.94 & 0.90 & 0.68 & 0.96 & 0.80 \\
sci.electronics & 0.77 & 0.75 & 0.76 & 0.79 & 0.67 & 0.73 \\
sci.med & 0.91 & 0.82 & 0.86 & 0.89 & 0.82 & 0.85 \\
sci.space & 0.89 & 0.90 & 0.89 & 0.80 & 0.93 & 0.86 \\
soc.religion.christian & 0.88 & 0.92 & 0.90 & 0.76 & 0.95 & 0.85 \\
talk.politics.guns & 0.79 & 0.90 & 0.84 & 0.69 & 0.93 & 0.79 \\
talk.politics.mideast & 0.97 & 0.90 & 0.94 & 0.90 & 0.95 & 0.93 \\
talk.politics.misc & 0.74 & 0.66 & 0.70 & 0.73 & 0.61 & 0.66 \\
talk.religion.misc & 0.70 & 0.66 & 0.68 & 0.88 & 0.35 & 0.50 \\
\midrule
\textbf{Average (macro)} & 0.82 & 0.80 & 0.79 & 0.81 & 0.79 & 0.79 \\
\textbf{Average (weighted)} & 0.82 & 0.81 & 0.79 & 0.81 & 0.80 & 0.80 \\
\textbf{Accuracy} & & & 0.81 & & & 0.80 \\
\end{tabular}
\label{tab:20ng-lambda-results}
\end{table}

Table~\ref{tab:20ng-lambda-results} reports the classification results using a multinomial Naive Bayes classifier trained on each of TF–IDF and $\mathcal{S}(\lambda_{ij})$ features. For each class, precision, recall, and F1 scores are shown. TF–IDF achieves macro-averaged precision, recall, and F1 of 0.82, 0.80, and 0.79, with overall accuracy 0.81. The proposed $\mathcal{S}(\lambda_{ij})$ term-weighting features achieve macro-averaged precision, recall, and F1 of 0.81, 0.79, and 0.79, with overall accuracy 0.80. The weighted averages of precision, recall, and F1 are very close to the macro-averaged scores, because the \textit{20~Newsgroups} dataset is relatively balanced across its classes.

Looking more closely, $\mathcal{S}(\lambda_{ij})$ outperforms or matches TF–IDF in 9 out of 20 classes in precision with the largest improvement in precision observed in the \texttt{talk.religion.misc} class (+0.18). In recall, $\mathcal{S}(\lambda_{ij})$ is greater than or equal to TF–IDF in 9 classes with the largest gain in the \texttt{comp.os.ms-windows.misc} class (+0.35). In terms of F1 score, $\mathcal{S}(\lambda_{ij})$ is greater than or equal to TF–IDF in 7 classes, with the largest improvement again in the \texttt{comp.os.ms-windows.misc} class (+0.41). These results indicate that $\mathcal{S}(\lambda_{ij})$ can be comparable to TF–IDF for term-weighting on document classification tasks, albeit with a higher computational overhead.

\subsection{The R8 dataset example}\label{subsec:numerical-real-ii}
In this subsection, we further evaluate our proposed method on the \textit{R8} dataset~\cite{CardosoCachopo2007}, a collection of single-class labeled documents extracted from the \textit{Reuters-21578} text categorization collection~\cite{Lewis1987}. This dataset focuses on the eight most frequent \textit{Reuters-21578} classes: \texttt{acq}, \texttt{crude}, \texttt{earn}, \texttt{grain}, \texttt{interest}, \texttt{money-fx}, \texttt{ship}, and \texttt{trade}. The collection is highly imbalanced, with the most frequent class (i.e., \texttt{earn}) accounting for approximately 49\% of all collection documents. We applied the same preprocessing steps as described in \Cref{subsec:numerical-real}. Specifically, text was converted to lowercase, while non-alphabetic characters and stopwords were removed. After preprocessing, the collection contained 19,688 unique terms. We use the standard predefined train-test split, containing 5,501 training documents and 2,190 testing documents. 

Before training the classifier, we empirically verify that the vocabulary satisfies the theoretical conditions required for the beta-binomial approximation derived in Claim~\ref{claim:cl1}, namely that the fitted parameters satisfy $\alpha_i \ll 1$ and $\alpha_{\neg i} \gg 1$ for the majority of terms using the same parameter estimation methodology from \Cref{subsec:numerical-real}. 

\begin{figure}[!ht]
    \centering
    \includegraphics[width=\linewidth]{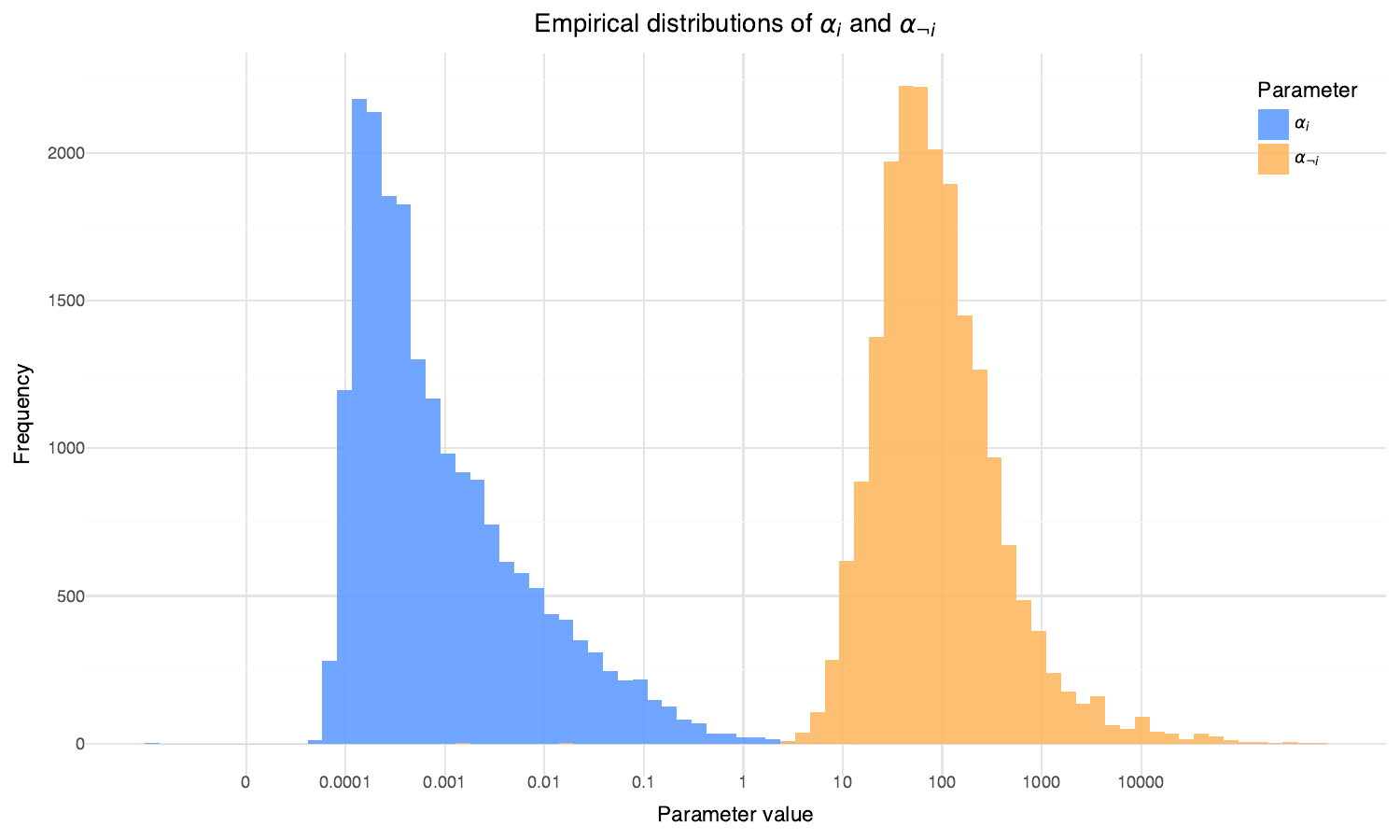}
    \caption{Empirical distributions of the fitted beta-binomial parameters $\alpha_i$ and $\alpha_{\neg i}$ for the \textit{R8} dataset vocabulary, shown on a natural logarithmic scale. The pronounced concentrations of mass at $\alpha_i \ll 1$ and $\alpha_{\neg i} \gg 1$ support the assumptions of Claim~\ref{claim:cl1}.}
    \label{fig:empirical-alphas-r8}
\end{figure}

Figure~\ref{fig:empirical-alphas-r8} shows the distributions of the fitted parameters $\alpha_i$ and $\alpha_{\neg i}$ on a natural logarithmic scale. These parameters are consistent with the assumptions of Claim~\ref{claim:cl1}: 96.95\% of terms satisfy $\alpha_i < 0.1$, while 97.19\% satisfy $\alpha_{\neg i} > 10$. As illustrative examples, the bursty and semantically specific term ``student'' was assigned $(\alpha_i, \alpha_{\neg i}) = (0.00025,\, 63.37)$, while ``program'' yielded $(0.0396,\, 170.92)$. In contrast, high-frequency function words exhibit substantially larger target parameters; for instance, ``told'' was assigned $(1.08,\, 982.11)$ and ``said'' $(8.76,\, 437.97)$.

\begin{table}[!ht]
\centering
\caption{Multinomial Naive Bayes classification results on the \textit{R8} dataset using TF–IDF and $\mathcal{S}(\lambda_{ij})$ features.}
\begin{tabular}{lcccccc}
 & \multicolumn{3}{c}{\textbf{TF–IDF}} & \multicolumn{3}{c}{$\boldsymbol{\mathcal{S}(\lambda_{ij})}$} \\
\cmidrule(lr){2-4} \cmidrule(lr){5-7}
\textbf{Document Class} & Precision & Recall & F1 & Precision & Recall & F1 \\
\midrule
acq & 0.98 & 0.91 & 0.94 & 0.95 & 0.99 & 0.97 \\
crude & 0.90 & 0.93 & 0.92 & 0.91& 0.87& 0.89\\
earn & 0.95 & 0.97 & 0.96 & 0.99& 0.98& 0.98 \\
grain & 0.91 & 1.00 & 0.95 & 1.00 & 0.30 & 0.46 \\
interest & 0.87 & 0.74 & 0.80 & 0.98 & 0.52 & 0.68 \\
money-fx & 0.77 & 0.92 & 0.84 & 0.72& 0.90 & 0.80\\
ship & 0.76 & 0.78 & 0.77 & 1.00 & 0.33& 0.50\\
trade & 0.80 & 0.96 & 0.87 & 0.70 & 0.93 & 0.80 \\
\midrule
\textbf{Average (macro)} & 0.87 & 0.90 & 0.88 & 0.90& 0.73 & 0.76\\
\textbf{Average (weighted)} & 0.94 & 0.94 & 0.94 & 0.95 & 0.94 & 0.94 \\
\textbf{Accuracy} & & & 0.94 & & & 0.94 \\
\end{tabular}
\label{tab:r8-lambda-results}
\end{table}

Table~\ref{tab:r8-lambda-results} compares the performance of TF–IDF and $\mathcal{S}(\lambda_{ij})$ using a multinomial Naive Bayes classifier. In this experiment, we tuned $\mu$ to the mean document length $\mu=65.75$ and $\sigma^2=1$, consistent with our methodology discussed in \Cref{subsec:numerical-real}. We refer the interested reader to Fig.~\ref{fig:r8-sensitivity-analysis} in \Cref{appn:sensitivity-analyses} for an accompanying sensitivity analysis. Both methods achieve a high overall accuracy of 0.94. However, we notice that classes like \texttt{grain}, \texttt{ship}, and \texttt{interest} exhibits lower F1 scores. This can be attributed to these classes being of the smallest in the dataset, accounting for only 5.8\% of the documents combined, whereas more common classes like \texttt{acq}, \texttt{crude}, and \texttt{earn} achieve higher F1 scores. Consequently, the macro-average scores are significantly lower than the weighted-average scores, since the macro average treats minority classes as equally important. We also observe that TF–IDF outperforms $\mathcal{S}(\lambda_{ij})$ on these minority classes with small number of documents.

\section{Discussion} 
\label{sec:discussion}
This paper has taken initial steps towards toward establishing a statistical justification for the widely used TF–IDF term-weighting scheme. Our main contribution is to demonstrate that the TF–IDF variants BTF–IDF and TF–ICF arise naturally from a likelihood-ratio test statistic for quantifying word burstiness, formulated within a beta-binomial modeling framework for document representation. In an example of synthetic data generated by the Dirichlet-multinomial model, we observed a strong correlation between the PLR test statistic and corresponding TF–IDF term weights. Furthermore, we proposed a term-weighting scheme from the PLR test statistic and showed that it performed comparably to TF–IDF on document classification tasks. However, this term-weighting scheme's main contribution is providing a theoretical foundation rather than improving performance, especially considering the extra computation required per term.

Future research should explore generalizing the results to account for dependencies between words. For instance, the occurrences of words like ``Socrates'' and ``Plato'' are not independent since the likelihood of the word ``Plato'' occurring in a given document increases given the presence of the word ``Socrates''. Cummins~\cite{Cummins2017} proposed a framework that generalizes statistical language models used in document generation based on the choice of a replacement matrix $\mathbf{M}$. For example, using $\mathbf{M}=\mathbf{0}$ corresponds to the multinomial model, while $\mathbf{M}=\mathbf{1}$ corresponds to the Dirichlet-multinomial model. Cummins exclusively explored diagonal choices of the replacement matrix, but also notes that allowing $\mathbf{M}$ to be an arbitrary matrix could provide a way to capture dependencies between different word types within documents. We advocate further exploration of employing such generalized replacement matrices to model the word co-burstiness phenomenon described above.

Another direction of future work is to investigate different penalty terms in the penalized likelihood-ratio test and study how the resulting statistic corresponds to existing term-weighting schemes~\citep[see][]{Ko2015, Chen2016, Okkalioglu2023, Alshehri2023}. Our choice of the gamma distribution was primarily based on its support on the positive real line, but one can explore other distributions such as the log-normal distribution or the inverse-gamma distribution. This research has the potential to further ground the foundations of text analysis in statistical testing while simultaneously spurring the development of new and improved term-weighting schemes.

\begin{acks}[Acknowledgements]
The authors would like to thank the two anonymous referees and the managing editor for their constructive comments that improved the quality of this paper.
\end{acks}

\begin{acks}[Code availability]
Code used to produce the numerical examples in this work is available at the GitHub repository \url{https://github.com/sheridan-stable/tfidf-lrt}, Release v2026.04.
\end{acks}

\begin{acks}[Contributions]
\textbf{Zeyad Ahmed}: Conceptualization, Formal analysis, Investigation, Methodology, Numerical experiments, Proofs, Results interpretation, Software, Visualization, Writing – original draft, Writing – review \& editing. \textbf{Paul Sheridan}: Conceptualization, Methodology, Project administration, Proofs, Results interpretation, Supervision, Validation, Writing – original draft, Writing – and review \& editing. \textbf{Michael McIsaac}: Funding acquisition, Methodology, Results interpretation, Supervision, Writing – review \& editing. \textbf{Aitazaz A. Farooque}: Funding acquisition, Supervision, Results interpretation, Writing – review \& editing. All authors reviewed the results and approved the final version of the manuscript.
\end{acks}

\begin{funding}
The work was supported by the Natural Science and Engineering Research Council of Canada [RGPIN-2016-04384 to M.M. and NSERC Discovery and NSERC Alliance Sustainable Agriculture Research Initiative Grants to A.A.F.].
\end{funding}

\bibliographystyle{imsart-number}
\bibliography{bibliography}

\begin{appendix}
\numberwithin{equation}{section} 
\renewcommand{\thefigure}{\Alph{section}\arabic{figure}} 
\setcounter{figure}{0} 

\section{Technical details} \label{appn}

\subsection{First-order approximation of the gamma function} \label{appn:gamma-approx}
We derive the approximation
\begin{equation*}
\Gamma(x + a) = \Gamma(x) + \bigO(a)
\end{equation*}
for small values of $a$, using a Taylor expansion of the logarithm of the gamma function.

Recall that the digamma function is defined as the logarithmic derivative:
\begin{equation*}
\psi(x) = \frac{d}{dx} \log \Gamma(x).
\end{equation*}
Expanding $\log \Gamma(x + a)$ around $a = 0$ yields:
\begin{eqnarray*}
    \log \Gamma(x + a) &=& \log \Gamma(x) + a \psi(x) + \sum_{k=2}^\infty \frac{a^k}{k!} \psi^{(k-1)}(x) \\
    &=& \log \Gamma(x) + a \psi(x) + \bigO(a^2).
\end{eqnarray*}
Exponentiating both sides gives:
\begin{equation*}
\Gamma(x + a) = \Gamma(x) \cdot \exp\left(a \psi(x) + \bigO(a^2)\right).
\end{equation*}
Expanding the exponential for small $a$:
\begin{equation*}
\exp\left(a \psi(x) + \bigO(a^2)\right) = 1 + a \psi(x) + \bigO(a^2).
\end{equation*}
So we have:
\begin{eqnarray*}
    \Gamma(x + a) &=& \Gamma(x) \left(1 + a \psi(x) + \bigO(a^2) \right) \\
    &=& \Gamma(x) + a \Gamma(x)\psi(x) + \bigO(a^2).
\end{eqnarray*}
Hence, for small $a$, it follows that:
\begin{equation*}
\Gamma(x + a) = \Gamma(x) + \bigO(a).
\end{equation*}

\subsection{Evaluating the logarithm of the gamma penalty term} \label{appn:log-gamma}
Starting from the gamma penalty term in Eq.~\eqref{eq:gamma-penalty},
\begin{equation*}
    f(\hat{\alpha}_{0i}; \mu, \sigma^2) 
    = \frac{\left( \frac{\mu}{\sigma^2} \right)^{\mu^2 / \sigma^2}}
    {\Gamma\!\left( \frac{\mu^2}{\sigma^2} \right)}
    \hat{\alpha}_{0i}^{\frac{\mu^2}{\sigma^2} - 1}
    \exp\!\left( -\frac{\mu}{\sigma^2} \hat{\alpha}_{0i} \right),
\end{equation*}
with the substitution
\begin{equation*}
    \hat{\alpha}_{0i} = \frac{\sigma^2}{\mu} \left( b_i - n_i - 1 + \frac{\mu^2}{\sigma^2} \right),
\end{equation*}
and defining
\begin{equation*}
    \eta = \frac{\mu}{\sigma}, \quad r_i = n_i - b_i + 1,
\end{equation*}
we can write
\begin{equation*}
    \hat{\alpha}_{0i} = \frac{\sigma^2}{\mu} \left( \eta^2 - r_i \right).
\end{equation*}
Taking logs:
\begin{equation*}
    \log f(\hat{\alpha}_{0i}; \mu, \sigma^2) = \eta^2 \log\frac{\mu}{\sigma^2} + (\eta^2 - 1)\log \hat{\alpha}_{0i} - \frac{\mu}{\sigma^2} \hat{\alpha}_{0i} - \log \Gamma(\eta^2).
\end{equation*}
Substitute $\hat{\alpha}_{0i}$:
\begin{equation*}
    \eta^2 \log\frac{\mu}{\sigma^2}
    + (\eta^2 - 1)\log\left( \frac{\sigma^2}{\mu} (\eta^2 - r_i) \right) - (\eta^2 - r_i) - \log \Gamma(\eta^2).
\end{equation*}
Combining the first two log terms:
\begin{eqnarray*}
    && \eta^2 \log\frac{\mu}{\sigma^2}
    + (\eta^2 - 1)\left[ \log(\eta^2 - r_i) + \log\frac{\sigma^2}{\mu} \right] \nonumber \\
    &=& (\eta^2 - 1)\log(\eta^2 - r_i) + \log\frac{\mu}{\sigma^2}.
\end{eqnarray*}
So:
\begin{equation*}
    \log f(\hat{\alpha}_{0i}; \mu, \sigma^2) = (\eta^2 - 1)\log(\eta^2 - r_i) + \log\frac{\mu}{\sigma^2} - (\eta^2 - r_i) - \log \Gamma(\eta^2).
\end{equation*}
Using the Stirling expansion~\cite[\href{https://dlmf.nist.gov/5.11\#E1}{(5.11.1)}]{NIST2025}:
\begin{equation*}
    \log \Gamma(z) \approx \left(z - \frac12\right)\log z - z + \log\sqrt{2\pi},
\end{equation*}
with $z = \eta^2$ we get
\begin{equation*}
    -\log\Gamma(\eta^2) \approx -\left(\eta^2 - \frac12\right)\log\eta^2 + \eta^2 - \log\sqrt{2\pi},
\end{equation*}
giving
\begin{equation*}
    \begin{split}
    \log f(\hat{\alpha}_{0i}; \mu, \sigma^2) &\approx (\eta^2 - 1)\log(\eta^2 - r_i) + \log\frac{\mu}{\sigma^2} - (\eta^2 - r_i) - \left(\eta^2 - 1/2\right)\log\eta^2 + \eta^2 \\
    &\quad - \log\sqrt{2\pi}.
    \end{split}
\end{equation*}
Writing $\log\frac{\mu}{\sigma^2} = \log\eta^2 - \log\mu$ and simplifying
\begin{eqnarray*}
    \log f(\hat{\alpha}_{0i}; \mu, \sigma^2) &\approx& (\eta^2 - 1)\log(\eta^2 - r_i) + \log\eta^2 - \log\mu + r_i - \left(\eta^2 - 1/2\right)\log\eta^2 \\
    &\quad& - \log\sqrt{2\pi} \nonumber \\
    &=& (\eta^2 - 1)\log(\eta^2 - r_i) - \log\mu + r_i - \left(\eta^2 - 3/2\right)\log\eta^2 - \log\sqrt{2\pi}.
\end{eqnarray*}

\subsection{Sensitivity analyses}\label{appn:sensitivity-analyses}
In this appendix, we visualize the overall accuracy scores for multinomial Naive Bayes classification tasks using $\mathcal{S}(\lambda_{ij})$ features on the datasets used in this paper, namely \textit{20~Newsgroups} and \textit{R8}, across different hyperparameters configurations. We evaluate the sensitivity of classification accuracy to the penalty term parameters, specifically $\mu \in \{30, 60, 90, 120, 150, 180\}$ and $\sigma^2 \in \{0.5, 1.0, 2.0, 3.0, 4.0, 5.0\}$. For both datasets, the text was preprocessed by converting to lowercase, removing punctuation, non-alphabetic characters, and stopwords, and collapsing multiple whitespaces.

As described in \Cref{subsec:numerical-real}, the resulting vocabulary for the \textit{20~Newsgroups} dataset comprises 88,701 unique terms, with a mean training document length of 165.29. Figure~\ref{fig:20ng-sensitivity-analysis} illustrates the accuracy scores across the $(\mu, \sigma^2)$ grid. The results indicate that the classifier is stable, with accuracy minimally affected by the choice of the hyperparameters. In this analysis, the optimal pair ($\mu=60, \sigma^2=2.0$) achieved an accuracy of 0.80868, outperforming other pairs by only a marginal difference.

\begin{figure}[!ht]
    \centering
    \includegraphics[width=0.9\linewidth]{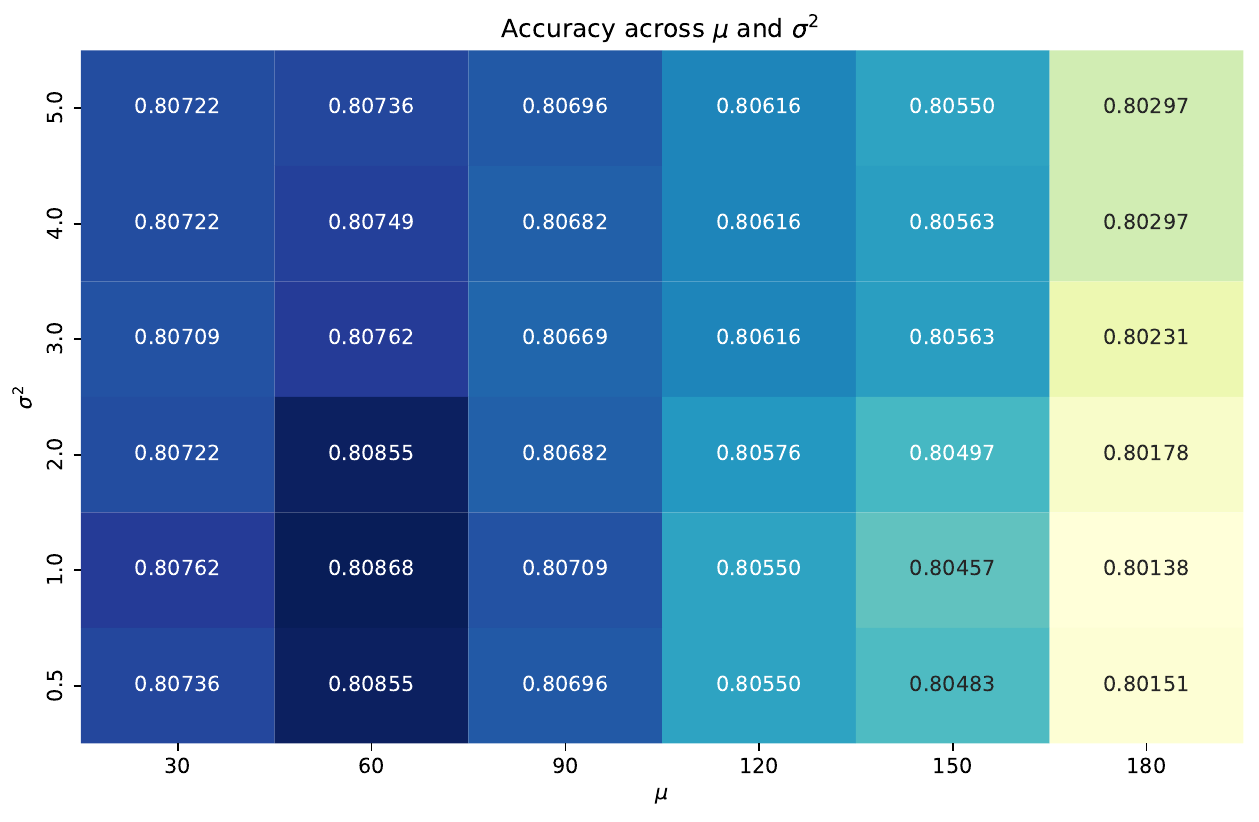}
    \caption{Sensitivity of classification accuracy for a Naive Bayes classifier using $\mathcal{S}(\lambda_{ij})$ features, with respect to the hyperparameters $\mu\in\{30, 60, 90, 120, 150, 180\}$ and $\sigma^2\in\{0.5, 1.0, 2.0, 3.0, 4.0, 5.0\}$, evaluated on the \textit{20~Newsgroups} dataset.}
    \label{fig:20ng-sensitivity-analysis}
\end{figure}

The \textit{R8} dataset consists of 19,688 unique terms and a mean training document length of 65.75 as described in \Cref{subsec:numerical-real-ii}. Figure~\ref{fig:r8-sensitivity-analysis} presents the accuracy grid for the same hyperparameter grid. Consistent with the \textit{20~Newsgroups} results, the performance remains stable across all values. The peak accuracy of 0.94521 was achieved at $(\mu=120, \sigma^2=1.0)$ and $(\mu=180, \sigma^2 \in \{1.0, 2.0\})$.

\begin{figure}[!ht]
    \centering
    \includegraphics[width=0.9\linewidth]{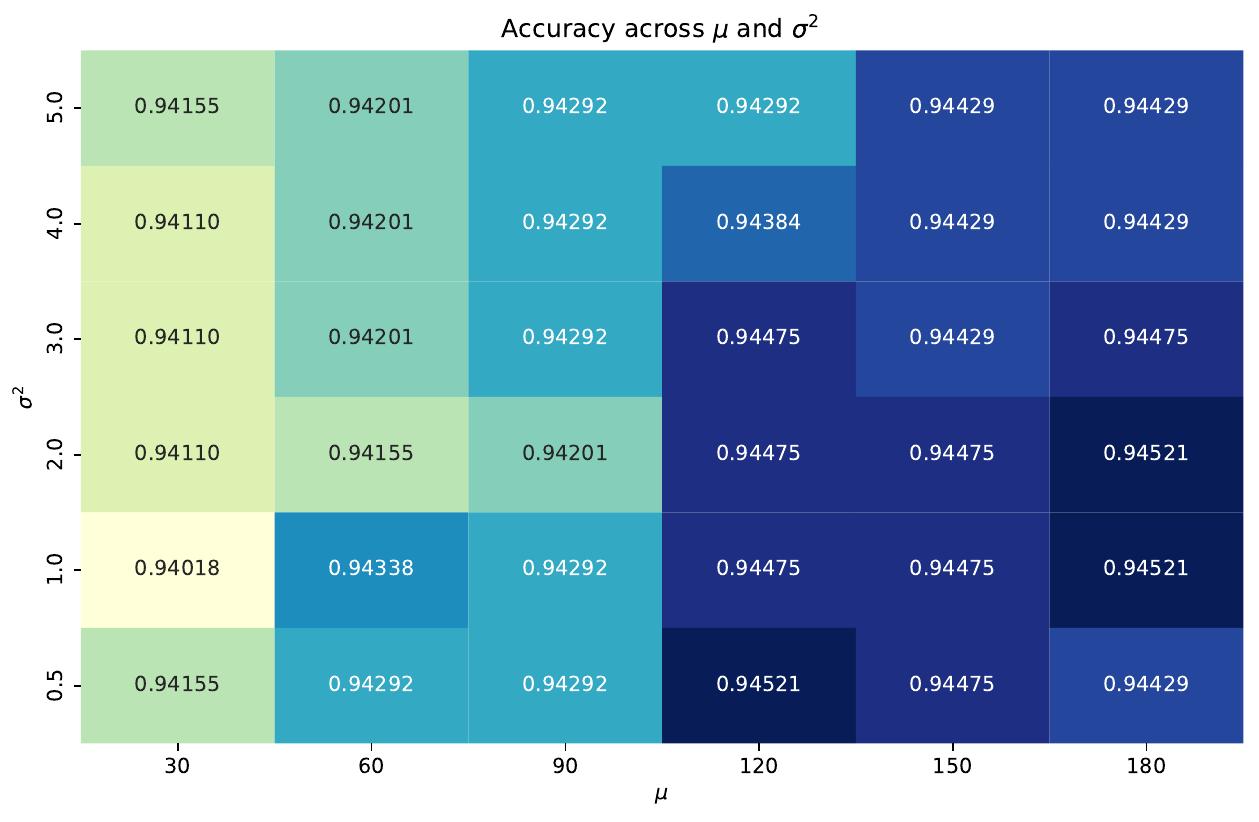}
    \caption{Sensitivity of classification accuracy for a Naive Bayes classifier using $\mathcal{S}(\lambda_{ij})$ features, with respect to the hyperparameters $\mu\in\{30, 60, 90, 120, 150, 180\}$ and $\sigma^2\in\{0.5, 1.0, 2.0, 3.0, 4.0, 5.0\}$, evaluated on the \textit{R8} dataset.}
    \label{fig:r8-sensitivity-analysis}
\end{figure}

Notably, for $\sigma^2 > 1.0$, the term $(\eta^2 - 2r_i + 1) \log(\max(1, \eta^2 - r_i))$ in Eq.~\eqref{eq:lambda-ij} often evaluates to zero, as $\eta^2 - r_i < 1$ for most terms as discussed in \Cref{sec:gbb-mles,subsec:evaluation-of-lrt,subsec:numerical-real}.
\end{appendix}

\end{document}